\newtheorem{lemma}{Lemma}[section]
\date{September 2, 2025}
\title{Understanding Reinforcement Learning for Model Training,\\ and future directions with GRAPE}
\author{Rohit Patel}
\affil[]{Meta Superintelligence Labs\footnote{Thanks to Kushdesh Prasad for extensive reviews and feedback. Additional thanks to Vikas Bahirwani and Karl Tuyls for review and feedback. This work was done independently without Meta’s involvement or resources.}}
\begin{document}
\maketitle

\begin{abstract}
This paper provides a self-contained, from-scratch, exposition of key algorithms for instruction tuning of models: Supervised fine-tuning (SFT), Rejection Sampling, REINFORCE, Trust Region Policy Optimization (TRPO), Proximal Policy Optimization (PPO), Group Relative Policy Optimization (GRPO), and Direct Preference Optimization (DPO). Explanations of these algorithms often assume prior knowledge, lack critical details, and/or are overly generalized and complex. Here, each method is discussed and developed step by step using simplified and explicit notation focused on LLMs, aiming to eliminate ambiguity and provide a clear and intuitive understanding of the concepts. By minimizing detours into the broader RL literature and connecting concepts to LLMs, we eliminate superfluous abstractions and reduce cognitive overhead. Following this exposition, we provide a literature review of new techniques and approaches beyond those detailed. Finally, new ideas for research and exploration in the form of GRAPE (Generalized Relative Advantage Policy Evolution) are presented.
\end{abstract}

Modern Large language models (LLMs) have demonstrated remarkable capabilities across a wide range of tasks. Much of the progress in improving generation quality, aligning model behavior with human preferences, and improving safety in natural language tasks can be attributed to reinforcement learning (RL) methods, particularly reinforcement learning from human feedback (RLHF) \citep{ouyang2022training}. However, majority of existing resources on the topic remain inaccessible to practitioners without substantial time commitment. This is because: a) Familiarity with the broader RL literature is assumed, drawing on concepts, notation, and language from optimal control theory and game theory which can obscure concrete mechanics when applied to LLMs, b) Crucial details are often missing, especially those linking theory to implementation, c) Notation is inconsistent across resources and same thing can mean different concepts.

Here, we cover key aspects of Reinforcement Learning for Model Training, calling it as such (RLMT). This paper does three things:
\begin{enumerate}
    \item In Sections~\ref{sec:instructiontuned}-\ref{sec:rejectionsampling}, we start by covering Instruction tuning, SFT and Rejection Sampling to lay foundations. In Section~\ref{sec:rl}, we detail five key RLMT algorithms, REINFORCE \citep{williams1992simple}, Trust Region Policy Optimization (TRPO) \citep{schulman2015trust} Proximal Policy Optimization (PPO) \citep{schulman2017proximal}, Group Relative Policy Optimization (GRPO) \citep{shao2024deepseekmath}, and Direct Preference Optimization (DPO) \citep{rafailov2023direct} using the following philosophy:
\begin{itemize}
    \item Build from First Principles: We assume little prior knowledge, building up to every formula and method step-by-step from the basics
    \item Focus on intuition: We explain the motivation behind the math. You'll understand why a technique is needed (e.g. clipping in PPO) and how it intuitively solves a problem
    \item Ground everything in LLMs: Abstract ideas are always connected to their practical meaning. For instance, what exactly is it that the KL divergence is measuring the distance between in an LLM
    \item Ensure Clarity and Rigor: We use consistent industry-standard notation and do not \emph{hand-wave} details, ensuring a thorough and clear explanation
\end{itemize}
Our hope is that this will make the content more accessible and foster greater research in RLMT. 
    \item In Section~\ref{sec:emergingapproaches} we provide a literature review of other RLMT approaches such as curriculum learning, RLAIF, Process supervision, Self-play and others. 
\item Finally, in Section~\ref{sec:newresearchideas} we present some new ideas to the reader for exploration, bundled together as a new approach GRAPE (Generalized Relative Advantage Policy Evolution).
\end{enumerate}

We will start with pre-trained model capable of predicting next token, and walk through everything we need to train it into an instruction-tuned model. We have previously provided an intuitive treatment of LLMs in \citep{patel2024understandingLLMs} covering the basics of neural networks, training, embeddings, tokenizers, self-attention, layer normalization, dropout, multi-head attention, positional embeddings and the GPT and transformer architectures. In this paper we will abstract away the details of the tokenizer and simply think of a token as a word. As such, a pre-trained model works such that if you give it some text, it will try to predict the most likely token (i.e. word, for our purposes) as continuation of this text. Examples of pre-trained models are \citet{radford2018improving}, \citet{radford2019language}, \citet{touvron2023llama}.

While this might seem like glorified auto-complete, pre-trained models can in fact do remarkable things. For example, if I were to try to complete the sentence \enquote{The capital of Germany is}, the model will complete it with Berlin. Moreover you can recursively feed the text back to the model to get it to write longer text and articles as shown in Figure~\ref{fig:pretrain_recursive_feed}. For example, one might start giving the model the following text:
\enquote{This article explores Florida's alligators, covering their habitat, physical characteristics, diet, reproduction, interaction with humans, and fascinating facts.}
Continuing this text will cause a good pre-trained model to complete the entire article on Florida's alligators along with the aspects mentioned in the seed text.
\begin{figure}[h]
    \centering
    \includegraphics[trim=3cm 9cm 4cm 8cm,clip, width=\textwidth]{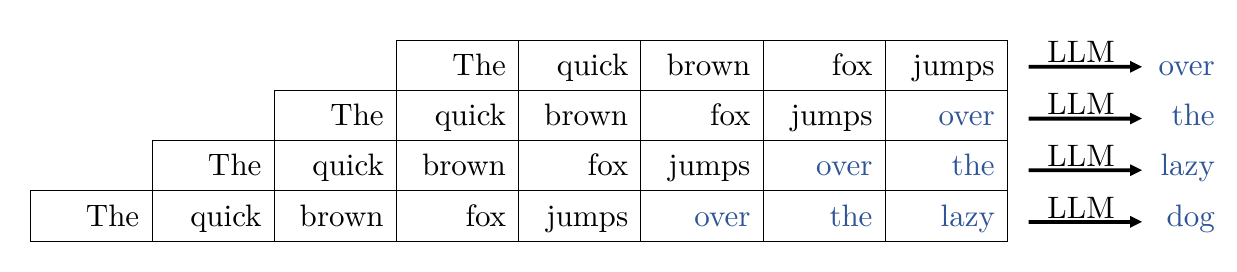}
    \caption{Recursively feeding generated tokens to LLMs for longer text generation}
    \label{fig:pretrain_recursive_feed}
\end{figure}

At this point, it is conceivable that anything you may want the model to do can be reframed as a completion objective, e.g. \enquote{This is a 1000 word essay on...} instead of \enquote{Write me a 1000 word essay on...}. So why do we need instruction tuned models?

\section{Instruction tuning models}\label{sec:instructiontuned}
Meaningfully using pre-trained models for useful purposes would require a change in behavior, which is difficult. Instruction tuned models, on the other hand, can be used in a natural question answering format, which can lead to ease of use and adoption. This is evident by the fact that LLMs really caught on when the first instruction tuned models were made available.

Being text completion models, pre-trained models are trying to predict the most likely next token based on the corpus of data that they are trained on. This means that pre-trained models may:
\begin{itemize}
    \item Respond to questions with additional questions since training data may contain lists of questions
    \item Not do a great job of following instructions
    \item Make up facts or names/places that sound plausible but don't exist
    \item Generate offensive or harmful content without much consideration
\end{itemize}
The fine-tuning process is often used to further train the models to address these shortcomings. The biggest visible change this leads to is that the models follow instructions in a question-answer format, and as such we will call these models instruction tuned models (IT models) e.g. \citet{ouyang2022training}, \citet{thoppilan2022lamda}, \citet{touvron2023llama2openfoundation}.

Now, how do we get the model to answer questions? The simplest thing we can think of is if the model was trained on data where questions were followed by answers then it would learn to do text completions in a way so as to complete questions with answers. In essence, nothing about the model changes except now it is conditioned more strongly to complete any text that's a question with an answer. This is the key to instruction tuning. 

Yet, this doesn't get us all the way to where we would like to be. For example, if you enter \enquote{alligator and crocodile} in any LLM today, it will respond with a description of the animals and similarities and differences. If these are merely text completion, the expectation would be to continue the sentence in some reasonable way, rather than to treat it as a question and try to return an answer. How is this achieved? One simple approach would be to structure the training data in the model so that questions are preceded with \enquote{User} and answers with \enquote{Model}. Something like this:
\begin{enumerate}
    \item \textit{User:} What is an antelope? \textit{Model:} An antelope is a type of mammal that belongs to the Bovidae family...
    \item \textit{User:} How many legs does a spider have? \textit{Model:} A spider has eight legs...
    \item \textit{User:} Who wrote Romeo and Juliet? \textit{Model:} Romeo and Juliet, the iconic tragic love story, was written by...
\end{enumerate}
By structuring the question and answer in the training data in this manner, whenever the model sees \enquote{User:} followed by some text followed by \enquote{Model:}, it will be conditioned to treat the text as a question and complete the whole sequence with something that looks like a complete response related to that text. This is similar in spirit to chat markup language suggested in \citet{ouyang2022training}.

This works relatively well, but there can be issues. For example, what if the string User: or Model: occurs naturally in the question or the answer, for example \enquote{How is the delimiter 'User:' typically used in model training?}, which is a valid question to ask an LLM. Can we improve this system? Let's go back to the basics of how text is fed into these neural networks. Remember neural networks can only ingest and output numbers. As such, all text is broken up into subword \enquote{tokens} and then fed into the network. Each of these tokens is represented by a vector which is its \enquote{embedding}. So, in essence, when you are feeding text to an LLM what you are really feeding is a sequence of vectors each representing a token.

One nice way to delineate a question from an answer would be to simply mint two new tokens (and corresponding embedding vectors) that are placed in the position of User: and Model:. This is similar to perhaps inventing a new character for delimiting files because all the existing characters may already exist somewhere. Let's call these newly minted tokens $<USER>$ and $<MODEL>$ for the sake of convenience. This does not mean that the strings $<USER>$ and $<MODEL>$ have special meaning to the model - we are just using them here on paper to denote the special tokens because we need some way of denoting them. In reality, if someone were to feed these strings to the model, they would have their own tokenization same as anything else e.g. $<THISSTRING>$ or $<THATSTRING>$ etc. These tokens simply represent delimiters (we could have called them $<DELIM1>$ and $<DELIM2>$ and it would make absolutely no difference anywhere except in this discussion) in this work. With this new scheme, an embedding can be trained, and you can now have the model always follow the special tokens with an answer-type completion. Modern instruct tuned models use some variation of this scheme which was suggested in \citet{askell2021general}.

Now we have a pretty good scheme for instruction tuning a model and making sure a text completion model starts to behave more and more like a question answering model. If we can design a good corpus of questions and answers, then doing additional training on the model using the very same next token prediction scheme will give us what we need. Moreover, this stage can now be used to enforce other values on the model that are of interest. For example we can:
\begin{itemize}
    \item Put questions in the training data with specific instructions and answers that follow those instructions so the model can be better at instruction following in its answers
    \item Have questions that can lead to harmful or offensive content in responses and then answers that are refusals so that the model will learn to refuse answers to potentially harmful questions (e.g. respond with \enquote{Sorry I cannot provide that information} when asked a question \enquote{How to make a dirty bomb?})
\end{itemize}
These are not the only things one can address during training. Moreover, the above-mentioned objectives of instruction following and responsible AI are often achieved using a mixture of approaches and not simply what is described here.

\section{Supervised fine tuning}\label{sec:sft}
Supervised fine-tuning is no different from pre-training. In both cases, you are doing the exact same thing, training the model to predict the next token (other objectives exist, such as masked language modeling used in \citet{devlin2019bertpretrainingdeepbidirectional}, but are not nearly as prevalent and outside the scope here). The key difference is during pre-training you are using a much larger corpus such as the common crawl whereas for SFT you have a much smaller and higher quality dataset. 

Let's actually write down the loss function for the next token prediction objective. Let's assume that we have an LLM with a vocabulary size of 32,000. Now this means that the output layer of the LLM will contain 32000 numbers. After softmax is applied, we get 32k numbers that are all between 0 and 1 and they all add up to 1 and as such this can be treated as the vector of probabilities for the 32k tokens. Let's name these 32k numbers with indexed variables such that the first number is $p_1$ the second one is $p_2$ and so on such that the $i^{th}$ number is $p_i$ going all the way to $p_{32000}$

Now, in the next token prediction case we have the training data. Let's say the sentence \enquote{The quick brown fox jumps over the lazy dog} is part of our training data. So what we're doing here is that if we give the model part of the text, say \enquote{The quick brown fox} then the model should predict \enquote{jumps}. What this means is that of all the 32k tokens in the vocabulary, the probability of the token \enquote{jumps} should be the highest\footnote{In practice, tokens are not the same as words. Words are often broken down into one or more tokens. For example the word jumps could be broken down into two tokens \enquote{jump} and \enquote{s}. For sake of comprehensibility, we will assume that our language models are using words as tokens.}. Let's say jumps is the $i^{th}$ token then what we want to do is maximize $p_i$.

Now, the value of each of the $p_i$ will be different depending on which previous tokens were fed. For example, if we now feed \enquote{The quick brown fox jumps} to the network we want to maximize another $p_j$ where $j^{th}$ token is \enquote{over}. We need some concise way of representing this. Basically what we want to capture is \textit{Probability of \enquote{jumps} in the condition that \enquote{The quick brown fox} was fed to the model}. And we want to be able to represent that easily for many words. Let's label all the tokens in the sentence $tok_1$, $tok_2$, $tok_3$ and so on.
\begin{align*}
\vec{text} = \underbrace{\frac{\text{The}}{tok_1}\frac{\text{quick}}{tok_2} \frac{\text{brown}}{tok_3}\frac{\text{fox}}{tok_4}}_{\vec{s}_5} \tikzmarknode{a_t}{\frac{\text{jumps}}{tok_5}}\frac{\text{over}}{tok_6}\frac{\text{the}}{tok_7}\frac{\text{lazy}}{tok_8}\frac{\text{dog}}{tok_9}    
\end{align*}
\begin{tikzpicture}[overlay, remember picture]
    \coordinate (shift_down) at (0,0); % Adjust this value (e.g., -0.1 for less, -0.3 for more)
    % Draw an arrow from the 'a_t' node, pointing up, with some text
    \draw [<-, black] ($(a_t.south) + (shift_down)$) -- ($(a_t.south) + (0,-0.25) + (shift_down)$) node [below, align=left] {$_{a_5}$};
\end{tikzpicture}

 Let's use the vector $\Vec{text}$ to denote the vector of tokens which contains $T$ total tokens $tok_1,tok_2...tok_T$. It gets tiresome to write all the tokens repeatedly so let's use the notation $a_t$ to denote $tok_t$ and $\vec{s}_t$ to denote $tok_1, tok_2,...,tok_{t-1}$ which will make it easier for us to express things. Note $\vec{s}_t$ counts first $t-1$ tokens not first $t$ tokens. Let's use $\pi$ to denote the probability of an event. And let's use \enquote{$|$} to denote "conditional on". So we can write the above probability as:
\begin{align*}
    &\text{Probability of ``jumps'' given the text ``The quick brown fox''}\\
    \quad\text{or}\quad & \pi\left(\text{jumps} \mid \text{The quick brown fox}\right) \\
    \quad\text{or}\quad & \pi\left(tok_5 \mid tok_1,tok_2,tok_3,tok_4\right)\\
    \quad\text{or}\quad & \pi\left(a_5 \mid \vec{s}_5\right)
\end{align*}
Now if we want the model to learn the whole sentence, we would be predicting these probabilities recursively. Applying the chain rule of probability, we can multiply these probabilities to get the total probability that the model will generate the sequence. As such the objective is:
\begin{align*}
    &\text{Maximize: }\pi(tok_1)\times \pi(tok_2 | tok_1) \cdots \times \pi(tok_9 | tok_1, tok_2,\ldots, tok_8)\\
    \quad\text{or}\quad &     \text{Maximize: }\pi(a_1)\times \pi(a_2 | \vec{s}_2) \times \pi(a_3 | \vec{s}_3) \cdots \times \pi(a_9 | \vec{s}_9)
\end{align*}
These probability numbers are extremely small. For each one of these, the probabilities are one of 32k numbers that all add up to 1. So you are really multiplying and maximizing really tiny numbers. If all probabilities were equal, they would be $\frac{1}{32000} = 0.00003125$, and many of these probabilities will be much smaller. Multiplying them for even a small sequence such as this one could mean that the number you are trying to maximize is around $\left(\frac{1}{32000}\right)^9 = 2.84 \times 10^{-41}$ which has 40 zeros after the decimal and before the digits 284! That is a difficult number to work with. Fortunately, if we take a log, then $log(1/32000)= -10.37$, which seems to be a much more manageable number. Moreover, log has this nice property that $log(a\cdot b) = log(a)+log(b)$ and so if we take log of these expressions, instead of multiplying really really tiny numbers and getting tinier numbers still, we can simply add up a few reasonably sized numbers. Since log is a nice monotonic function maximizing $x$ and maximizing $log(x)$ means the same thing. So why don't we write our objective function in a more manageable way:
\begin{align*}
    & \text{Maximize: }\log(\pi(a_1))+ \log(\pi(a_2 | \vec{s}_2)) + \cdots + \log(\pi(a_9 | \vec{s}_9))\\
    \quad\text{or}\quad &\text{Maximize: } \sum_{t=1}^9 \log(\pi(a_t | \vec{s}_t))
\end{align*}
Now we have nice numbers to maximize. Keep in mind all the log numbers are going to be always negative since log of anything under 1 is a negative number and all probabilities are going to be under 1. So maximizing a negative number means you are minimizing its magnitude since -5 is greater than -10 on the real line. One nice and clean option is to rewrite the problem as a minimization problem with a negative sign instead. That way, you have positive numbers that you are trying to minimize. Moreover, not all sequences are length 9 so let's make this a slightly more general $T$ length of the sequence.
\begin{align*}
    \text{Minimize: }\quad Loss =& - \sum_{t=1}^T \log(\pi(a_t |\vec{s}_t))
\end{align*}
Here, what you are really doing is changing the parameters of the model while minimizing this loss, so these probabilities will change. If we want to be really clear we should add somewhere in the loss function the clarification that it is coming from a specific model. One way to do it is simply to put the model in the condition, such that what you are really saying is something like \textit{Probability of \enquote{jumps} conditional on \enquote{The quick brown fox} being fed to the model where the model is M}. Let's use $NLL(\vec{text},M)$ to denote the loss. This would make the loss look something like this:
\begin{align}
    \text{Find M to minimize: }\quad  NLL(\vec{text},M)  =&  - \sum_{t=1}^T \log\left(\pi_M(a_t\ |\ \vec{s}_t)\right) \label{eq:nll}
\end{align}
All the mathematical notation may make it look complicated, but as we know, it is rather quite simple. We are just trying to maximize the probability of the specific tokens from the model. This formulation is also called \enquote{negative log-likelihood} because the probability of the tokens under the model can also be thought of as the likelihood of the model given the tokens. The model that gives us better values of probabilities is more likely to be a better model. So you can say you are maximizing the likelihood function and trying to find the best model that gives you the highest value of the likelihood function. We can use gradient descent to minimize the loss once you have a loss function. Supervised fine-tuning does just that with the negative log-likelihood as the loss function. This loss is what pre-training uses as well, at a much larger scale. The NLL loss is also referred to as the \enquote{cross entropy loss} in deep learning literature, which we discuss in Section~\ref{sec:crossentropy}. \citet{fisher1922mathematical}, \citet{kullback1951information}, \citet{shannon1948mathematical}, \citet{hopfield1987learning} and \citet{bengio2003neural} lay some of these foundations.

One last thing to note here is that the loss written above is for a single example. Usually when you are training you have a lot of training examples to run supervised fine tuning on and you take average loss over all those examples. Let's say you denote the set of all training samples by $\mathcal{S}$ and suppose the set $\mathcal{S}$ has $S$ samples; then the loss looks something like this:
\begin{align}
    SFTLoss &=\text{Average of } NLL(\vec{text},M)\text{ for all }\vec{text}\text{ in } \mathcal{S} \nonumber\\
    &= \frac{1}{S}\sum_{i=1}^S NLL(\vec{text}_i,M) \label{eq:avgloss}
\end{align}
One thing to note is that $\vec{text}$ is the concatenation of the question and the answer for a particular sample. This also means that if we want the model to simply learn how to generate answers to questions, we could calculate the NLL starting from token $t$ where $t$ is the first token of the answer. This does not change anything we discussed above, other than the slight change to NLL definition. 

So we're ready to instruction tune the model using SFT, but the issue now is finding a corpus of question-answer pairs that covers a wide range of topics. The written world is full of books, articles, papers etc. but not nearly enough text exists in the form of Q\&A. One option is to have humans write question-answers, or source questions from somewhere (e.g. users asking questions online) and have humans write answers to supplement whatever Q\&A datasets one could find online. All of these are time consuming and resource intensive approaches. Nonetheless, SFT remains a critical first step in training models to be instruction tuned and getting them to a somewhat respectable place of generating answers.

\section{Rejection Sampling}\label{sec:rejectionsampling}
Our model can now answer questions, somewhat. What can we do to scale our training and make it better? One idea is that we can use the model itself to generate more question-answering data. If we could source questions from somewhere and have the model generate answers, we would have more question-answer pairs. This would substantially reduce the work of creating such a dataset for fine-tuning. The issue however is that since the model isn't well trained, it may not generate the best data. How can we get over this? 

Why don't we generate many responses for each prompt from the model and see if some of them are good responses to the prompts, effectively rejecting all the other responses. This should allow us to curate a model generated dataset that is of higher quality than the average model response. Moreover, it is a lot easier for people to select the best response from a group rather than type up a good response to a question. The process looks something like this:
\begin{enumerate}
    \item Start with a prompt
    \item Use the current model to generate G different completions (responses) for that prompt, by sampling with enough randomness (temperature) to get a variety.
    \item Rank the G responses.
    \item Select the top response (or top few responses) and discard the rest.
    \item Fine-tune the model on the prompt paired with that selected best response treating it as SFT data.
\end{enumerate}

We now have a way of using the model to generate the dataset for supervised fine-tuning. It is a model generated dataset but nonetheless we are going to use it to train the model using SFT. 

How can we further improve things? While it is easier for humans to find good responses from a sample, it is still a manual process. What if we had a model pick better responses? We could train a \enquote{preference model} that is capable of selecting the better of two responses when provided with two options. We would still need to have human labeled data for training this preference model, but after the initial set of human labeling work, we would be able to bootstrap an automatic process using the preference model. To train this preference model, we would thus need data that has the prompt and two generated responses where a human has selected a preferred response. In essence, you have two $\vec{text}_i, \vec{text}_j$ where each one is a combination of the prompt and one of the generated responses. Let $hp$ denote the actual human preference recorded in data such that $hp$ can take values zero and one, and if $hp=1$ then $\vec{text}_i$ is the human preferred text. The model takes the two $\vec{text}$ inputs and it gives as output the probability of each of the inputs being preferred by a human (i.e. it returns a single probability of the first input being preferred by human, let's call it $P(\vec{text}_i)$, since the other probability will simply be $P(\vec{text}_j) = 1-P(\vec{text}_i)$. Now what we want to do is maximize $P(\vec{text}_i)$ when $hp=1$ and maximize $P(\vec{text}_j)$ when $hp=0$. We can write the loss using the same trick above to convert probabilities to negative log-likelihood and turn it into a minimization problem:
\begin{align*}
    \text{Minimize}: -hp\cdot \log(P(\vec{text}_i)) - (1-hp) \cdot \log(1-P(\vec{text}_i))
\end{align*}
And so when human preference $hp=1$ then $P(\vec{text}_i)$ is being maximized and when $hp=0$ then $P(\vec{text}_j) = 1-P(\vec{text}_i)$ is being maximized. This is the same as before - and in the same way, to get the average loss we simply take an average over all prompts in the training dataset. This loss function is called \enquote{binary cross entropy loss}. The general concept of rejection sampling was suggested by \citet{von1951various}, whereas \citet{zelikman2022star} and \citet{yuan2023scaling} demonstrate usage in LLM training. We can also denote $\vec{text}_w$ as the winning text and simplify the expression:
\begin{align*}
    Loss = -\log(P(\vec{text}_w))
\end{align*}

Rejection sampling followed by supervised fine-tuning in a near-automated loop sounds like a great way to improve models. But it has certain issues and areas for improvement:
\begin{enumerate}
    \item \textbf{Discrete learning:} When we are rejection sampling, we are learning in discrete steps where we sample a lot of responses on a lot of prompts and then run a round of SFT from improvement. This means that multiple parameter updates are performed using the rejection-sampled data.
    \item \textbf{No Learning from Mistakes:} When we throw away the bad outputs, the model doesn’t learn why they were bad. It only gets signal from the one best answer that it was good. If the model repeatedly produces a certain kind of error in the rejected samples, this method doesn’t explicitly penalize that error. The feedback is purely positive (on the chosen answer) and not negative on the others. In essence, the model isn’t told what not to do, only what to do more of. This could limit the improvement or require many examples for the model to implicitly figure out the boundaries of bad responses.
    \item \textbf{High Computational Cost:} Generating many samples per prompt is computationally intensive, especially for large models. If we generate 10 candidates for each of 100k prompts, that’s 1 million model forward passes to sift out 100k best samples. This is much more work than a single pass per prompt. This can sometimes be mitigated by parallel generation or clever sampling, but it’s still a factor to consider.
    \item \textbf{Model Collapse and Bias:} This is the most important issue with rejection sampling. If we always pick the single highest-scoring answer according to a fixed criterion, we risk over-optimizing the model on that criterion. The model might start giving very narrow, optimized responses that score well but lack diversity or even coherence. For example, if the reward model or human annotator inadvertently prefers verbose answers, the model might converge to always giving overly long answers. In extreme cases, the model could exploit weaknesses in the scoring system, a phenomenon akin to reward hacking. Without any counterbalance, repeatedly fine-tuning on only top outputs can drive the model distribution to collapse around patterns that the scorer loves, even if those patterns are unnatural. We may be pushing the model into areas where the scoring model is not well calibrated, causing garbage outputs that the scorer mistakenly rates high. This issue is made worse by the fact that \enquote{No learning from mistakes} necessitates multiple rounds of rejection sampling followed by SFT increasing the chances of model collapse.
\end{enumerate}

\section{Reinforcement learning}\label{sec:rl}
How can we do better than rejection sampling? Let's try to fix the issues we listed above. Much of this section will deal with reinforcement learning methods, pioneered by \citet{brown1951iterative}, \citet{bellman1957dynamic}, \citet{barto1983neuronlike}, \citet{sutton1988learning}, \citet{watkins1989learning}, \citet{littman1994markov}, \citet{borgers1997learning}, \citet{hu2003nash}, and many others. However, we will discuss them in the context of LLM training. In the reinforcement learning literature, the set of rules that determine the value of $a_t$ given the current $\vec{s}_t$ is called a \enquote{policy}, and $a_t$ is referred to as \enquote{action} at time $t$ whereas $\vec{s}_t$ is referred to as the \enquote{state} at time $t$. In essence, a policy is something that provides a probability distribution over the set of actions (in our case, the set of actions is the model vocabulary, and the output of the final softmax layer is the probability distribution) given the current state $\vec{s}_t$ (in our case the current state is the string, i.e. the tokens, up to point $t-1$). You then sample the action $a_t$ from this probability distribution. For example, you could simply take the token with highest probability as $a_t$ (and that would be called greedy decoding). In our case, the model is the policy, and as such the terms model and policy are interchangeable for our purposes.

\subsection{REINFORCE}
One of the things we can do is that instead of treating the model generated samples as SFT data and running multiple parameter updates, we create new samples after each parameter update (or more practically a small number of updates, let's say less than five) thereby leading to a more continuous improvement in model and reducing the possibility of the model being overfit to a particular set of output. However, this would increase the compute burden many-fold. We already have to do a lot of generations to get a single training sample. So we first have to solve our issue of not using all the generated samples before we could do this.

How about using all the generated training samples in the loss by simply weighing them by their goodness? The current SFT formulation from Equation~\ref{eq:avgloss} simply averages the loss over all samples, effectively giving every rejection sampled example a weight of one. What if we had a function, let's call it a \enquote{reward function}, that would give us a score for each sample in terms of how good or bad it is. This way, we could use all generated samples during training and we would simply weigh the less good samples appropriately, or even negatively, in the loss function. It would look something like this:
\begin{align*}
    \text{Reward Weighted Loss} = \frac{1}{S}\sum_{i=1}^S R(text_i) NLL(\vec{text}_i,M)
\end{align*}
Where $R(text_i)$ is the reward from text sample $text_i$ and $NLL(\vec{text}_i,M)$ is as defined in Equation~\ref{eq:nll} . This is called the REINFORCE algorithm and is due to \citet{williams1992simple}. In practice, this loss function can lead to high variance in the gradients. To mitigate that, a baseline is subtracted from the reward. The baseline can be anything, but what is commonly used is something that depends on the input text being fed to the model for next token prediction (i.e. $\vec{s}_t$). Let's use $V_M(\vec{s}_t)$ to denote the baseline at time $t$. As such, we cannot separate the reward function from the sum in Equation~\ref{eq:nll} and we will need to expand the entire term. This is what it looks like:
\begin{align}\label{eq:reinfoceloss}
    &\text{REINFORCE Loss} = \hfill  \nonumber\\ &-\frac{1}{S}\sum_{i=1}^S \sum_{t=1}^T \left( R(\vec{text_i}) -V_M(\vec{s}_{it}) \right)\cdot \log(\pi_M(a_{it}\ |\ \vec{s}_{it}))
\end{align}
Where $\vec{text}_i$ is the $i^{th}$ sample text in the dataset and $a_{it}$ denotes the $t^{th}$ token of this text $i$ and $\vec{s}_{it}$ denotes the sequence of tokens of this text $i$ from 1 to $t-1$ (and as such, a whole $\vec{text}$ of length $T$ can also be denoted by $\vec{s}_{T+1}$). The baseline function $V_M(\vec{s}_t)$ depends on the text tokens we have up to $t$, and the model $M$. Finally, $\pi_M(a_t\ |\ \vec{s}_t)$ is the probability of the model $M$ giving the token $a_t$ as the next token when the sequence of tokens $\vec{s}_t$ (which is of length $t-1$ tokens) is input into the model, as discussed before.

Recall that each of the training sample $\vec{text_i}$ is essentially a combination of a prompt and a generated completion (in case of instruct-tuning, the completion would be a response). This whole sequence of prompt+response constitutes a single training example. If multiple responses were generated for one prompt, we would have multiple text sequences. One interesting thing to note here now is that once we have a reward function, we no longer necessarily need to do multiple generations for the same prompt. We can simply do one generation, improve the model, and do another generation with the improved model. Every generated response can be used for training and it makes the process a lot more efficient than rejection sampling. Another thing to note is that same as in the SFT case, since we want the model to only learn the answers to the questions and not the questions themselves, we start the inner sum of the term from a value of $t$ such that it corresponds to the first token of the string $\vec{text}_i$ where the answer starts, i.e. $t$ for each $i$ such that $\vec{text}_{it}$ is the first token of the answer following the question string. This can be done for all reinforcement learning algorithms, and as such we will not revisit this note again.

The process of taking actions, getting rewards, and learning from those rewards in a loop to improve future decisions is called reinforcement learning. REINFORCE is one of the simplest reinforcement learning algorithms, and it addresses the first three points that we made about rejection sampling. Let's finish our discussion of REINFORCE by understanding where these rewards come from - something that is relevant to many reinforcement learning algorithms.

\subsection{Value function}
Let's build some intuition into the baseline function $V_M(\vec{s}_t)$ that we just introduced. What would be a good definition for this function? Our very first formulation of REINFORCE loss was to weigh the probability of each $\vec{text}$ by the reward it achieves. However, we then introduced a baseline that was defined at token level, which means that we are now weighing the NLL of each token differently even within the same $\vec{text}$. Consider a high-reward text, meaning we want the model to have a high probability of returning those sequence of tokens, i.e., we want $\pi(a_t | \vec{s}_t)$ to be high for any t in that $\vec{text}$. What this implies is that $a_t$ is a high value next-token when you have $\vec{s}_t$ as the string so far. One thing we could do to make the model converge faster is to increase the loss when the model is far from returning the high-value tokens at $\vec{s}_t$, and vice-versa. How do we do that? 

Let's assume the reward $R(\vec{text}_t)$ is the reward we expect when selecting $a_t$ as the next token at $\vec{s}_t$ (since $\vec{s}_t$ and $a_t$ are coming from $\vec{text}$). Now if we could somehow get a sense of reward we would get otherwise, using model $M$ and continuing generation from $\vec{s}_t$, then we could establish that as the baseline. We know the probability of each token under $M$ (the output of softmax layer) and so if we knew the expected reward from selecting other tokens in the vocabulary as ${a_t}$ we could calculate the expected reward that the model $M$ would generate for us at $\vec{s}_t$. This is precisely what the value function $V_M(\vec{s}_t)$ is. Note that the value function is defined for each $\vec{s}_t$ and $a_t$ does not figure in the definition because we aggregate over all possible tokens $a_t$ that the model $M$ could have chosen. 

There is one issue here, though; to calculate $V_M(\vec{s}_t)$ we need to know the expected reward from selecting each possible token $a_t$ and we do not know that. We can only determine the reward using the reward function when we have completed generating the full answer. This implies that the calculation of $V_M(\vec{s}_t)$ requires us to play out all possible generations by this model until the end - every combination - record the probability of each one, then calculate the reward for each possible $\vec{text}$ generated, and then combine that with the probabilities to get the expected reward. This is cumbersome. Furthermore, the model $M$ is constantly changing as it is trained. One observation is that if you know $V_M(\vec{s}_{t+1})$ for all possible values of $\vec{s}_{t+1}$ (meaning all possible values of $a_t$ since $\vec{s}_t$ is fixed and $\vec{s}_{t+1}$ is simply a concatenation of strings $\vec{s}_t + a_t$) then you can write $V_M(\vec{s}_t)$ in terms of those terms. This is captured in equations by \citet{bellman1957dynamic} who also introduced the concept of value function. However, this does not help much in this situation. In practice, we often simply train another neural network to act as the value function and do not actually estimate the value function for a particular model. 

The model used as the value function is also called the \enquote{critic} model, with the main model $M$ also called the \enquote{actor} (another word for \enquote{policy}, in our case), and these reinforcement learning approaches called \enquote{actor-critic} methods.

\subsection{Advantage function}
The function above that we used to weigh the different log probabilities is called the advantage function. If we denote the advantage function by $A_M(a_t,\vec{s}_t)$ then in the above example we have:
\begin{align}\label{eq:advantagefunction}
    A_M(a_t,\vec{s}_t) & = R(\vec{text}) - V_M(\vec{s}_t)
\end{align}
Let's build a little intuition behind this beyond \enquote{we subtract a baseline to reduce variance}. The advantage function is defined at the token level, meaning that for every token $tok_t$ in $\vec{text}$ we have a defined value (remember both $a_t$ and $tok_t$ refer to the $t^{th}$ token in $\vec{text}$). This is different from our rewards, which are defined at the level of the entire text. What the advantage function is trying to capture is the \enquote{advantage} in selecting this particular token as $a_t$ over selecting whatever token the model $M$ might otherwise select. In a way, it gives you the additional reward that can by achieved by the model generating that specific $a_t$ vs following its own probability distribution - a fact that follows from our discussion of the value function above. The concept of advantage updating was first proposed by \citet{Baird1993}.

This is not the only form of advantage function that is used. In practice, a common advantage function used is GAE (Generalized Advantage Estimator), which we will not cover here. However, the function described above is a special case of GAE with two of its parameters (the discount factor, and the crediting parameter) set to one. Advantage estimators are an active area of research.

\subsection{Reward model}\label{sec:rewardmodel}
So where does the reward $R(\vec{text})$ come from? Typically, a model is trained to predict a score for each of the generations. This model can be trained using the same human preference data that was used to train the preference model for rejection sampling. The idea of training a reward model from human preferences was proposed by \citet{furnkranz2012preference} and applied by \citet{christiano2017deep} in the context of deep reinforcement learning. Let's take a look at how this can be done.

What we are trying to model is a reward function that essentially returns how good a response/text is, $R(\vec{text})$. The reward here represents how good humans find a particular response and, as such, when ranking, the person will choose the response with higher reward. However, what we have are human-ranked preferences on pairs of responses and not the reward itself. So we have to train a reward function that is consistent with these observed preferences. To train the model we need a loss function that is differentiable. How do we make a differentiable loss function from binary human preferences?

Interestingly, we have encountered this before. When doing next word prediction, we also only have the actual word choice available. This situation is similar except that the choice is made out of two responses instead of choosing a token from the whole token vocabulary. We can apply the same trick here, which is to treat the choice as a sample from output of probabilities that are differentiable. But we have another issue; here the rewards are scalar, not probabilities.

The output of the last layer of the language model are also scalar numbers. We can apply the same softmax here to convert reward scalars to probabilities. This would be a lot simpler since we only have two choices. Let's say $P(\vec{text_i})$ denotes that $\vec{text_i}$ was the chosen response between $\vec{text_i}$  and $\vec{text_j}$ then:

\begin{align*}
    P(\vec{text_i}) =  \frac{\exp{(R(\vec{text_i}))}}{\exp{(R(\vec{text_i}))}+\exp{(R(\vec{text_j}))}}
\end{align*}

Another way to approach this is to say that we know the text with greater reward should be preferred. So why don't we just take the difference of the reward function i.e. $\exp{(R(\vec{text_i}))}-\exp{(R(\vec{text_j}))}$ and if it positive then $i$ is preferred and if it is negative then $j$ is. Now, the difference in the reward is a scalar so we must convert this to a probability. This is also a situation we have seen before, and discussed the intuition behind. We would use the sigmoid function here to get the probability.
\begin{align*}
    P(\vec{text_i}) =  \frac{1}{1 + \exp{(-(R(\vec{text_i})-R(\vec{text_j})))}}
\end{align*}
The two approaches are the same and those two probabilities are mathematically identical if you work them out. Now, what do we normally do when we have a probability and we want to use it as a loss function? We take the negative log likelihood. Moreover, for this particular case when we have two choices we already wrote down the loss above which is the binary cross entropy loss: 
\begin{align}
\text{Reward function loss} &=  -hp\cdot \log(P(\vec{text_i})) - (1-hp) \cdot \log(P(\vec{text_j}))
\end{align}
Note that $P(\vec{text_j}) = 1 - P(\vec{text_i})$. This loss is very similar to the loss of the preference model. But we have a problem! While the preference model returned the probability directly, the reward function returns a single scalar. This loss requires values of both $R(\vec{text_i}), R(\vec{text_j})$ to calculate $P(\vec{text_i})$.

What we need to do here to calculate the loss is run two separate forward passes since unlike the preference model which returns the probability, here we need to calculate it from the rewards. In practice, this is not a huge deal as models are trained in batches anyway and loss is aggregated over examples in the batch before doing backward pass. In this case, if we put the $i,j$ pair in the same batch the loss is easily calculated from the outputs generated and backward pass can be done as usual. Now, if we assign $\vec{text}_i$ as the winning text (let's call it $\vec{text}_w$ and $\vec{text}_j$ as the losing text (say $\vec{text}_l$) then we can see that in equation above $hp=1$ and loss can be written as:
\begin{align}\label{eq:rewardfunctionloss}
\text{Reward function loss} &=  -\log(P(\vec{text_w}))\nonumber\\
&= - \log\left(\sigma(R(\vec{text_w})-R(\vec{text_l}))\right) 
\end{align}
Where $\sigma$ is the sigmoid function.

\subsection{Trust Region Policy Optimization}\label{sec:trpo}
Let's say we are iterating on model $M$ to reduce the loss, and let's say $M_0$ is the model at the beginning of the iteration and $M_1$ is the current model being iterated upon. One of the things that is happening here is that we are drawing samples from $M_0$. However, $M_1$ is the model being updated. This means that our sampling distribution does not match the actual distribution of the data. Is this a cause for concern and do we need to worry about it? In Section~\ref{sec:importancesampling} we provide an example of trying to guess the average height at a basketball event. The event contains 5\% pro basketball players and 95\% non-players. If we sample more people from pro players and get the average height of that sample that would be a biased estimate of overall height since pro basketball players are taller. We then discuss how to address this using importance sampling. Here, we can use importance sampling to get a better estimator. Moreover, one of the main issues in working with probabilities that we had was that they were really small, and the products quickly got out of hand. We resolved the issue by using NLL (negative log-likelihood). Importance sampling, on the other hand, is already a ratio of probabilities and as such we don't really need to take the log\footnote{In practice, probability ratios often involve calculating log values of numerator and denominator then taking a difference and exponentiating the result to avoid precision problems} to make it manageable. Applying the importance sampling adjustment in our case from Equation~\ref{eq:importancesampling} to Equation~\ref{eq:reinfoceloss} and removing the logs, we get the loss:
\begin{align}\label{eq:ratioloss}
    \text{Loss} &= -\frac{1}{S}\sum_{i=1}^S \sum_{t=1}^T \frac{\pi_1(a_{it}\ |\ \vec{s}_{it})}{\pi_0(a_{it}\ |\ \vec{s}_{it})}  A_1(a_{it},\vec{s}_{it}) 
\end{align}
Here we abuse the notation a little and use $\pi_1$ to denote the probability from $M_1$ instead of writing $\pi_{M_1}$, we will use this shorthand going forward.

Now, we have not yet addressed the most critical issue that we discussed with rejection sampling, that of model collapse. Repeatedly fine-tuning on the output from the model itself can cause the model to collapse. Why would this happen?

Firstly, there is no possible way for the training sample set $\mathcal{S}$ to contain all possible questions. If we make large updates to model parameters, it is entirely possible that while the model improves on the training sample, it gets worse overall. Doing this repeatedly can continue to make the model worse. This is the classic issue of overfitting.

Secondly, consider a situation where the reward model is trained on pairs of responses where it learns to detect certain features/characters of responses that are preferable to humans (e.g. better formatting) and it has only ever been presented with high-quality responses, having never seen bad quality content. Because the reward model has not been exposed to quality or coherence as key factors, it will focus on formatting to distinguish preferred responses. The model can give high reward to well-formatted low-quality incoherent responses, causing the main model to go into a destructive spiral of prioritizing incoherent well-formatted responses to maximize reward. This is a hyperbolic example using formatting, but in practice model collapse can happen in similar fashion in more subtle ways.

So what can we do here? What if we somehow restricted the optimization so that the new model did not go very far from the original model? The final output of these models are the probabilities of various tokens, so the most straightforward approach would be to restrict the output probabilities from changing too much from the original. However, we do not have control over the output probabilities directly during the training process, since training involves updating the model parameters. The parameter updates, in turn, are under our control by affecting the gradients or adjusting the loss function.  Thankfully, our loss function is written in such a way that it uses these output probabilities. So, one simple approach here would be to simply modify the loss function so that the probabilities do not go too far away from the original. In Section~\ref{sec:kldivergence} we looked at a measure of distance called KL divergence, commonly denoted by $D_{KL}(p\|q)$ for two distributions $p$ and $q$. So we at least know a way of measuring the distance between probability distributions. Let's formalize what we mean by \enquote{these distributions} here. Effectively, what we want is that for any $\vec{s}_t$ the distribution over next token (of possible values of $a_t$) does not end up being too far under the two models $M_0$ and $M_1$. Let us use $\pi_M(\cdot|\vec{s}_t)$ to denote this probability distribution so that the probability of $a_t$ under this distribution is $\pi_M(a_t | \vec{s}_t)$. Then KL divergence in our case turns out to be:
\begin{align}\label{eq:kldivergence}
    D_{KL}\left(\pi_0(\cdot|\vec{s}_t) \| \pi_1 (\cdot|\vec{s}_t)\right) &= \sum_{a_t} \pi_0(a_t | \vec{s}_t)\log\left(\frac{\pi_0(a_t | \vec{s}_t)}{\pi_1(a_t | \vec{s}_t)}\right)
\end{align}
Here, the sum is over all tokens in the model vocabulary, and $\pi_0(\cdot|\vec{s}_t)$ is essentially the last layer after softmax generated by the model $M_0$ when tokens $\vec{s}_t$ are the model inputs. Now we have a concrete way to measure the distance between the distributions generated by the two models $M_0$ and $M_1$. How can we use this? Why don't we simply add this term as a penalty to the loss function which will cause the optimization to penalize movements far away from the starting distribution. Since we don't know if the scale of our loss function matches that of the calculated KL penalty, we will need a scaling factor here. The loss can look something like this:
\begin{align}\label{eq:trpoloss}
\text{TRPO Loss} &= -\frac{1}{S}\sum_{i=1}^S \sum_{t=1}^T \frac{\pi_1(a_{it}|\vec{s}_{it})}{\pi_0(a_{it}|\vec{s}_{it})}  A_1(a_{it},\vec{s}_{it})+ \beta\cdot D_{KL}\left(\pi_0(\cdot|\vec{s}_t) \| \pi_1 (\cdot|\vec{s}_t)\right)
\end{align}
Let's recap what each of the the terms mean:
\begin{align*}
    M_0 = &\ \text{The original model to be updated}\\
    M_1 = &\ \text{The model being iterated on}\\
    S = &\ \text{Total number of text samples in the training set}\\
    T = &\ \text{The number of tokens in a text sample}\\
    \vec{text}_i = &\ \text{The $i^{th}$ text sample generated by $M_0$ now being used for RL}\\
    \vec{s}_{it} = &\ \text{The first $t$-1 tokens of $\vec{text}_i$}\\
    a_{it} = &\ \text{The $t^{th}$ token of $\vec{text}_i$}\\
    \pi_M(\cdot|\vec{s}_{t}) = &\ \text{Probability distribution over the vocabulary for the next token $a_t$ given}\\
    &\ \ \text{$\vec{s}_t$ as input string into model $M$}\\
    \pi_M(a_t|\vec{s}_{t}) = &\ \text{Probability token $a_t$ being the next token under distribution $\pi_M(\cdot|\vec{s}_{t})$ }\\
    A_M(a_t,\vec{s}_t) = &\ \text{Advantage of playing $a_t$ in state $\vec{s}_t$ instead of the default action in $M$}\\
     D_{KL} = &\ \text{KL divergence as discussed above}\\
     \beta = &\ \text{The scaling factor for KL divergence penalty term}
\end{align*}
This is the TRPO (Trust Region Policy Optimization) algorithm as described by \citet{schulman2015trust}. Much of the crucial theoretical groundwork for this was laid in \citet{kakade2002approximately} and \citet{kakade2001natural}. However, in the TRPO paper the authors observe that it is hard to balance the scaling factor $\beta$ to get step sizes that are not too small. One way to take larger step sizes more robustly is to use a constraint on KL divergence instead of putting it in the loss. This makes the TRPO optimization problem to be:

\begin{align}
\label{eq:trpoopt}
\text{minimize over $M_1$} &: -\frac{1}{S}\sum_{i=1}^S \sum_{t=1}^T \frac{\pi_1(a_{it}|\vec{s}_{it})}{\pi_0(a_{it}|\vec{s}_{it})}  A_1(a_{it},\vec{s}_{it})\\
\text{subject to constraint} &: D_{KL}\left(\pi_0(\cdot|\vec{s}_{it}) \| \pi_1(\cdot|\vec{s}_{it})\right) \le \delta &\text{$\forall i,t$, for some $\delta$}\nonumber
\end{align}
The authors suggest a further relaxation of the constraints to only constrain the average value of $D_{KL}$ over all tokens and samples, rather than for each token in each sample.

\subsection{Proximal Policy Optimization}\label{sec:ppo}
One of the things that TRPO achieves is to limit how much the output probability distribution for $a_t$ given an $\vec{s}_t$ diverge from the original model. In their paper titled Proximal Policy Optimization, \citet{schulman2017proximal} propose another way to introduce this constraint. Let's go back to Equation~\ref{eq:ratioloss} and see how we can do this differently. Here's the equation written for just one $\vec{text}$:
\begin{align*}
    \text{Loss} &= - \sum_{t=1}^T \frac{\pi_1(a_{t}|\vec{s}_{t})}{\pi_0(a_{t}|\vec{s}_{t})}  A_1(a_{t},\vec{s}_{t}) 
\end{align*}
As we iterate over $M_1$ we are really just trying to make sure that the probabilities of $a_t$ don't drift too far away. The probabilities would drift away in the first place as a result of our loss minimization strategy. And one insight here is that if we capped the loss function every time the two probabilities were further apart than a certain value (let us say $\epsilon$), then there would be no gain that the optimization process would achieve from moving the probabilities further apart. Since they are already written as a ratio, why don't we simply make sure that ratio is in the neighborhood of 1 by clipping the ratio in the loss function and taking max (or min inside the negative sign). This gives us the PPO Loss function:
\begin{align}\label{eq:ppoloss}
\text{PPO Loss} = - \frac{1}{S}\sum_{i=1}^S \sum_{t=1}^T \min & \left[ \frac{\pi_1(a_{it}|\vec{s}_{it})}{\pi_0(a_{it}|\vec{s}_{it})}  A_1(a_{it},\vec{s}_{it}) ,\right.\\
&  \left. \ \ \ \text{CLIP}\left( \frac{\pi_1(a_{it}|\vec{s}_{it})}{\pi_0(a_{it}|\vec{s}_{it})}, 1-\epsilon,1+\epsilon \right)A_1(a_{it},\vec{s}_{it}) \right]\nonumber
\end{align}
The CLIP function simply clips the provided value at both ends and, as such, CLIP$(v,x,y) = \min(\max(v,x),y)$. The PPO paper additionally presents a method involving an adaptive KL penalty coefficient, specifically the $\beta$ in Equation~\ref{eq:trpoloss}, which is modified following each minibatch optimization step. However, the authors note that clipping, described above, performs better in practice.
\subsection{Loss function used in practice}
The TRPO and PPO loss functions above are not exactly what is popularly used today. It is common to put the KL penalty term in the reward function itself (instead of the loss) while using the optimization algorithms above (e.g. \citet{ziegler2020finetuninglanguagemodelshuman}, \citet{stiennon2022learningsummarizehumanfeedback}, \citet{ouyang2022training}). This amounts to modifying the reward function to look something like this:
\begin{align}\label{eq:rewardwithpenalty}
    R'(a_t,\vec{s}_t, M_0, M_1) &= R(a_t,\vec{s}_t) - \beta \log\left(\frac{\pi_1(a_t | \vec{s}_t)}{\pi_0(a_t | \vec{s}_t)}\right)
\end{align}
Where $R'$ is the modified reward function. We can similarly write a modified Advantage function $A'$ which is simply defined by replacing the reward function $R$ in $A$ by the reward function $R'$.
\begin{align*}
    A'_M(a_t,\vec{s}_t, M_0, M_1) & = R'(a_t,\vec{s}_t, M_0, M_1) - V_M(\vec{s}_t)
\end{align*}
The idea is that putting the KL Divergence term in the reward function serves the same purpose as in Equation~\ref{eq:trpoloss} - that of keeping the updated model distribution closer to the original model distribution. There is a lot going on here:
\begin{enumerate}
    \item The reward function with which we had initially worked defined a reward for all of $\vec{text}$, such that $R(\vec{text})$ was the reward. Here, the arguments of the reward function contain $a_t$ and $\vec{s}_t$, implying that we have rewards at each token level.
    \item The penalty looks familiar to the definition of $D_{KL}$ but has two key differences:
    \begin{itemize}
        \item KL Penalty is a comparison of distributions, and as such you aggregate the term inside the logarithm over all values of $a_t$. Here it appears that we are simply taking a single term, the $a_t$ in the sample
        \item $M_0$ and $M_1$ appear to be flipped where in $D_{KL}$ we had $\pi_0(\cdot|\vec{s}_t)$ in the numerator and $\pi_1(\cdot|\vec{s}_t)$ in the denominator
    \end{itemize}
    \item Putting the loss in reward function is not the same thing as having the loss in optimization function, which is what TRPO describes
    \item It appears the reward is now dependent on the model, which seems odd
\end{enumerate}
Let's talk about each of these things below. 

First, let's talk about the fact that our reward function is trained on human preference rankings of full text sequences (question-answer pairs) and as such provides reward only for the full $\vec{text}$ string. When we pull the reward inside the summation over tokens term in Equation~\ref{eq:reinfoceloss}, the reward $R(\vec{text})$ can be interpreted as the reward for each token. In other words, setting a token-level reward for each token in the sequence $R(a_t, \vec{s}_t) = R(\vec{text})$ is the same thing as weighing the negative log likelihood of the entire sequence with the final reward. However, this formulation opens up the possibility of modifying the reward at the token level and doing more with it. That is what we are doing here.

Next, let's address the fact that the KL divergence term in Equation~\ref{eq:trpoloss} was $D_{KL}(\pi_0(\cdot|\vec{s}_t)\|\pi_1(\cdot|\vec{s}_t))$ whereas in Equation~\ref{eq:rewardwithpenalty} it is more similar to $D_{KL}(\pi_1(\cdot|\vec{s}_t)\|\pi_0(\cdot|\vec{s}_t))$. From discussion in Section~\ref{sec:kldivergence} we know that the two are not the same as KL divergence is not symmetric. As a side note, $D_{KL}(\pi_1(\cdot|\vec{s}_t)\|\pi_0(\cdot|\vec{s}_t))$ is also sometimes called reverse KL divergence (of $D_{KL}(\pi_0(\cdot|\vec{s}_t)\|\pi_1(\cdot|\vec{s}_t))$), but we will simply refer to it as $D_{KL}(\pi_1(\cdot|\vec{s}_t)\|\pi_0(\cdot|\vec{s}_t))$. Moreover, the $D_{KL}$ definition includes expectation over all possible values of $a_t$ (which would be the model's vocabulary, i.e. full token space) and that is clearly not being done in this modification.

The key is that when calculating $D_{KL}$ the sum (i.e. expectation) is taken over all possible $a_t$ under the first distribution in KL (i.e., the numerator. In the case of TRPO, that's $M_0$). However, our overall loss is defined by simply aggregating over all the samples and all tokens (the two outside sums). These samples are drawn from the latest iteration of the model (which is $M_1$ if we are iterating towards $M_2$, and so on) so if we could somehow use that probability term in $D_{KL}$ (i.e. use $\pi_1(a_t | \vec{s}_t)$ instead of $\pi_0(a_t | \vec{s}_t)$ as the probability outside log) we could vastly simplify our lives. Why? Because then we will not have to calculate the penalty term $D_{KL}$ for each token at all, we could simply place the log term in the sum, and the sums we are already taking will turn that into $D_{KL}$. The definition of $D_{KL}(\pi_1(\cdot|\vec{s}_t)\|\pi_0(\cdot|\vec{s}_t))$ provides exactly that. So, essentially by placing the term $\beta\cdot \log(\pi_1(a_t | \vec{s}_t) - \log(\pi_0(a_t | \vec{s}_t))$ in the reward function, and if everything else is linear, the term simply aggregates into an estimate of $D_{KL}(\pi_1(\cdot|\vec{s}_t)\|\pi_0(\cdot|\vec{s}_t))$ in the final optimization function as long as the advantage function is linear in the reward function (such as the one we saw above, but also the commonly used GAE advantage function which we have not discussed).

This feels hacky - we simply changed the KL Divergence term knowing that it is not symmetric. We discuss the impact of this asymmetry in Section~\ref{sec:kldivergence} but the key thing is that by placing $M_1$ in the numerator what we are ensuring is that $M_1$ does not assign probabilities to tokens that $M_0$ believes are not suitable, otherwise with the denominator being close to zero the term inside the log will explode. This has the impact of focusing $M_1$ in areas that $M_0$ has a high probability mass, which is what we want.

Lastly, let's talk about why putting the loss in the reward function works similarly to putting it in the optimization. We briefly alluded to this above, but if the advantage function is linear in reward function, then any term we add to the reward function can simply be separated out and aggregated independently. That is what happens here, and in-fact that is what allows us to simply put the log term in the reward function rather than calculating the full KL divergence for each token. The reward being dependent on the model being trained appears odd, but as we note the terms separate out nicely and it remains merely an implementation detail.

One key motivation of using the reward function modification for penalty here aside from the computational efficiency is that it allows practitioners to use the standard TRPO and PPO packages without modification, simply by specifying the updated reward function.

Now that we have discussed the reward function modification, what is the loss that is used in practice with PPO? Papers will generally use the modified reward with clipping. However, they are used slightly differently. Let's say $M_0$ is the original model and we draw samples from it and then do a round of updates to get $M_1$. Then we draw further samples from $M_1$ and do another round of updates to get $M_2$ and so on. The way the reward function modification is used is that the denominator always contains the original SFT model $M_0$ so as to keep the entire optimization process tethered. However, with clipping the ratio is often taken with the latest model $M_k$ from which the samples were drawn. One last thing to note is that the advantage function commonly used for PPO is Generalized Advantage Estimator (GAE) introduced in \citet{schulman2016high}.

\subsection{Group Relative Policy Optimization}\label{sec:grpo}
PPO is a very popular algorithm for reinforcement learning of LLMs, but the commonly used definition of advantage function requires the need for training a reward model and a value model in order to estimate the advantage. Is there room for improvement? The whole idea of an advantage function is to advantage the probability of the winning $\vec{text}$. A value function provides a baseline for how well a model can do from any starting string $\vec{s}_t$ and commonly used advantage function such as the one we defined in Equation~\ref{eq:advantagefunction} or the GAE commonly used with PPO use the value function to calculate advantage. 

Let's go back to draw some inspiration from rejection sampling. What if we generated multiple responses to every question and instead of selecting only the most promising response for SFT, what if we calculated the advantage of each response in the group by simply taking the difference from the average reward in the group? This would allow us to use all the samples, and would not require a value function to estimate the advantage. This is the core idea behind GRPO which was introduced in \citet{shao2024deepseekmath} and gained popularity with the launch of Deepseek V3 \cite{deepseekai2025deepseekv3technicalreport}. Here, we will discuss a variation of GRPO (called Dr. GRPO) presented in \cite{seaailab2025drgrpo} which eliminates some of the biases in the original GRPO paper. Let's say we have $Q$ questions in the database and for each one we generate $G$ responses so that our total sample size is $S=QG$. Let $\vec{text}_{qg}$ denote the full text with the question and the response concatenated for the $q^{th}$ question's $g^{th}$ response. Then the advantage function can be written as:
\begin{align}\label{eq:grpoadvantage}
    A(\vec{text}_{qg}) = R(\vec{text}_{qg}) - \frac{1}{G}\sum_{i=1}^G R(\vec{text}_{qi})
\end{align}
And the loss remains exactly the same PPO loss:
\begin{align}\label{eq:grpoloss}
\text{GRPO Loss} = - \frac{1}{QG}\sum_{q=1}^{Q}\sum_{g=1}^{G} \sum_{t=1}^T \min & \left[ \frac{\pi_1(a_{qgt}|\vec{s}_{qgt})}{\pi_0(a_{qgt}|\vec{s}_{qgt})} A(\vec{text}_{qg}) ,\right.\\
&  \left. \ \ \ \text{CLIP}\left( \frac{\pi_1(a_{qgt}|\vec{s}_{qgt})}{\pi_0(a_{qgt}|\vec{s}_{qgt})}, 1-\epsilon,1+\epsilon \right)A(\vec{text}_{qg}) \right]\nonumber
\end{align}
So in essence, GRPO is in-fact PPO but with a different advantage function than GAE which is commonly used in PPO. The key innovation of GRPO is not a new update mechanism, but the new advantage function to be used with PPO. Let's clarify a few things that look different and note some interesting facts about GRPO:
\begin{itemize}
    \item The dual sum on $Q$ and $G$ is the same thing as summing $i$ from 1 to $S$ where $S=QG$
    \item The index $it$ in $a_{it}$ was used to refer to the $t^{th}$ token in the $i^{th}$ completed text of the sample. Now sample has $QG$ data points and we are using $a_{qgt}$ to refer to $t^{th}$ token of the $g^{th}$ completion of the $q^{th}$ sample. Apart from using a different index to number the sample so we can track the groups, this fundamentally changes nothing. Same for $\vec{s}_{qgt}$
    \item The advantage function $A(a_t,\vec{s}_t,M)$ in Equation~\ref{eq:advantagefunction} was defined at token level.  The GRPO advantage function on the other hand is defined for the entire $\vec{text}$ sequence. This means we are using the same advantage function for each token in the sequence.
    \item The advantage function definition does not depend on the value function anymore, and depends only on the reward function. 
    \item The advantage function $A(a_t,\vec{s}_t,M)$ also depended on the model, since the value function depends on the model. The GRPO advantage function does not depend on the model being trained, but it does depend on the other generations in the group.
\end{itemize}

\subsection{Direct Preference Optimization}\label{sec:dpo}
How can we simplify things further? For PPO, we note the need for training a reward model and a value model in order to estimate the advantage function which allows us to calculate the loss. However, when we are training the preference model or the reward model, we are able to directly use human preferences to calculate a loss. The trick ultimately is to write a differentiable loss function that can help you update the model parameters in the right direction. 

For the reward model, we have a model that outputs reward and what we are really trying to do is nudge the model in the direction of increasing rewards for preferred $\vec{text}$ over the one less preferred. Our main model $M$ outputs probabilities, and we do want a similar dynamic here where we would like the model to increase probabilities of preferred $\vec{text}$. Following Equation~\ref{eq:nll}, let us continue working with negative log-likelihood of the responses - wanting to lower the NLL of a winning text (let us call it $NLL(\vec{text}_w)$) which is preferred over another losing text, $NLL(\vec{text}_l)$, in human preference. 

We can simply treat $NLL$ sort of negative of reward (since usually we want to maximize reward and minimize NLL) and pretend that our model generates reward and train it in the same way as a reward model. If follow identical reasoning to Section~\ref{sec:rewardmodel} and write the loss function below where $R(\vec{text})$ is simply replaced by $- NLL(\vec{text})$ we can use Equation~\ref{eq:rewardfunctionloss}
\begin{align*}
\text{Loss} &= -\log\left(\sigma\left(-(NLL(\vec{text}_w)-NLL(\vec{text}_l))\right)\right)\\
&= -\log\left(\sigma\left(\log(\pi_1(\vec{text}_w))-\log(\pi_1(\vec{text}_l)\right)\right)
\end{align*}
Where $\pi_M(\vec{text})$ is the probability of the full $\vec{text}$ being generated by model $M$. So, now we have a loss function written in terms of probabilities which our model does generate. We can use this to directly optimize the model for human preferences. However, in our eagerness, we have reintroduced the risk of model collapse that we had assiduously solved for, using clipping and the KL penalty. 

In their paper introducing Direct Preference Optimization, \citet{rafailov2023direct} show that modifying the loss above to replace $\pi_1(\vec{text})$ by the ratio $\frac{\pi_1(\vec{text})}{\pi_0(\vec{text})}$ along with a scaling factor $\beta$, is equivalent to having KL penalty in the reward. We will not reproduce the proof here. The final DPO loss can be written as:
\begin{align*}
\text{DPO Loss} &= -\log\left[\sigma\left\{\beta\log\left(\frac{\pi_1(\vec{text}_w)}{\pi_0(\vec{text}_w)}\right)-\beta\log\left(\frac{\pi_1(\vec{text}_l)}{\pi_0(\vec{text}_l)}\right)\right\}\right]
\end{align*}
And now we simply aggregate this loss over all samples in the data.

\section{Emerging approaches}\label{sec:emergingapproaches}
We covered some of the key approaches and algorithms used to train modern LLMs from pre-trained models to the wildly successful instruction tuned models in use today. Even with these successes, researchers are running into major roadblocks that are pushing the field in new directions. RLHF, for all its strengths, is slow and resource intensive because it requires a massive amount of high-quality preference data collected from people. This has made it hard to scale up. At the same time, methods like DPO work best when they have lots of direct comparisons (e.g., "response A is better than response B"). However, they are not as good for complex tasks, like multi-step math problems, where the feedback is often just a single \enquote{correct} or \enquote{incorrect} for the final answer. This makes it hard to know which specific step in the reasoning was good or bad. To solve these problems, research is moving forward in several key areas. To make training more scalable, researchers are exploring Reinforcement Learning from AI Feedback (RLAIF), which uses another AI to provide feedback instead of humans. To handle complex reasoning, the focus is shifting from rewarding just the final outcome to supervising the model's step-by-step thinking process. Finally, as models become more powerful, new training methods are being developed using game theory and self-play, like having AI agents debate each other or play against themselves to generate useful training signals. Let's briefly summarize these new research directions. 

\subsection{Curriculum learning}
The idea of curriculum learning refers to the idea of starting with easier examples and gradually moving to more challenging examples. As a precursor to the idea of curriculum learning in neural networks, \citet{elman1993, rohde1999} explore learning with withheld examples and/or increasing network size with divergent results. \citet{sanger1994} present one of the first examples of curriculum learning in robotic control problems. \citet{bengio2009} formalize the idea of introducing examples of increasing difficulty and call it curriculum learning. \citet{kumar2010self} propose self-paced learning, an algorithm designed to mitigate the problem of learning algorithms getting stuck in poor local optima. \citet{narvekar2020curriculumlearningreinforcementlearning}, \citet{wang2021surveycurriculumlearning} and \citet{soviany2022curriculumlearningsurvey} present surveys on curriculum learning. Results show that the order of training examples matters and that generally, incremental learning algorithms can benefit when training examples are ordered in increasing difficulty. The main conclusion from these and subsequent works in curriculum learning is that starting small and simple and gradually increasing the difficulty of the task can lead to faster convergence as well as increased performance on a task. 

\citet{parashar2025curriculumreinforcementlearningeasy} introduce the E2H Reasoner, a curriculum reinforcement learning (CRL) framework designed to improve the reasoning abilities of small-scale language models. The method works by decomposing complex problems into a sequence of tasks with increasing difficulty and using probabilistic schedulers to manage the training progression from easy to hard. Empirical results show that E2H Reasoner significantly enhances the performance of small LLMs (1.5B to 3B parameters) on reasoning and planning tasks, challenging the belief that these models are incapable of complex reasoning. The authors provide theoretical analysis supporting that this curriculum approach can be more sample-efficient than direct reinforcement learning on difficult tasks. 

\subsection{Reinforcement Learning with AI Feedback (RLAIF)}
The most straightforward solution to RLHF's main problem, its reliance on human data labelers, is to replace the human with an AI. This approach, called Reinforcement Learning with AI Feedback (RLAIF), was first tested by \citet{bai2022constitutional}. The core idea of RLAIF is to use a powerful AI model, instead of a person, to generate the preference data (e.g., "response A is better than B") needed to train a reward model or policy. The main goal is to solve the main problems with RLHF: the high resource use, slow pace, and difficulty of scaling up human data collection. 

In \citet{bai2022constitutional}, the critic model (or feedback model) is provided with a \enquote{constitution} which is a set of explicit, human-written principles to guide its feedback generation. This shifts the locus of human oversight from labeling individual data points to defining the general rules of desired behavior. The CAI process unfolds in two primary stages:
\begin{enumerate}
    \item Supervised Learning (SL) Phase: This stage aims to bootstrap the model's alignment. The model is prompted to generate responses, particularly for potentially harmful queries. It is then asked to critique its own response based on a principle from the constitution (e.g., "Identify how this response could be harmful") and subsequently revise it. This process of self-critique and revision generates a dataset of improved responses. This dataset is then used to supervised fine-tune the model.
    \item Reinforcement Learning (RL) Phase: This is the RLAIF stage proper. The SL-tuned model generates pairs of responses. A separate AI evaluator (critic model), guided by the constitution, provides preference labels for these pairs (e.g., "Choose the response that is less harmful"). This AI-generated preference dataset is used to train a preference model. Finally, the policy is optimized using an RL algorithm, with the constitution-aligned preference model providing the signal.
\end{enumerate}
\begin{table}[h!]
\label{tab:rlhf_vs_rlaif}
\centering
\begin{tabularx}{\textwidth}{|l|X|X|} 
\hline 
\multicolumn{1}{|c|}{\textbf{Feature}} & \multicolumn{1}{c|}{\textbf{RLHF}} & \multicolumn{1}{c|}{\textbf{RLAIF}} \\ 
\hline
Feedback Source & Human Annotators & AI Model\\
\hline
Scalability & Low; limited by human availability and speed & High; fully automated and continuous \\
\hline
Resources & High; human labor is resource intensive & Low; API calls are more efficient \\
\hline
Speed & Slow; annotation cycles can take weeks & Fast; rapid iteration is possible \\
\hline
Feedback Nuance & High; captures subjective, cultural, and contextual subtleties & Algorithmic; limited by the labeler's capabilities and biases \\
\hline
Primary Bias Profile & High-noise, low-bias: Inconsistent but diverse human judgments & Low-noise, high-bias: Consistent but reflects the AI's systemic biases \\
\hline
Key Challenge & Data collection logistics & Aligning the AI labeler and preventing bias amplification \\
\hline
\end{tabularx}
\caption{A comparative overview of the fundamental trade-offs between the RLHF and RLAIF approaches.}
\end{table}

In another work, \citet{lee2024rlaifvsrlhfscaling} demonstrate that RLAIF could achieve performance comparable to RLHF. For tasks like summarization and helpful dialogue, human evaluators showed no statistically significant preference between outputs from RLAIF-trained and RLHF-trained models. In the critical domain of harmlessness, the same study found that RLAIF actually outperformed RLHF, achieving higher harmlessness ratings from human evaluators.

\citet{li2025curriculum} apply curriculum learning to improve the reward model by training it on preference pairs that increase in difficulty, which they found helps mitigate issues of distribution shift. Hybrid RLAIF \cite{li2024hrlaif} was developed in response to findings that while basic RLAIF increased harmlessness, it sometimes led to a decrease in correctness; the study suggests using a hybrid of AI and human feedback to maintain helpfulness. Meanwhile, Multi-objective RLAIF \cite{williams2024morlaif} trains separate, smaller preference models for distinct principles like toxicity and factuality, which can be combined to form a more controllable reward.

However, the RLAIF framework faces significant critiques. A critical evaluation by \citet{sharma2024critical} questioned the necessity of the complex RL step altogether. Their work suggests that many of the observed gains from RLAIF may be an artifact of using a much stronger "critic" model to provide feedback. They demonstrated that simply performing supervised fine-tuning with the stronger model as the teacher could outperform the full RLAIF pipeline, implying that the benefits might stem from better data, not the RL process itself.

Bias amplification is a key concern for RLAIF. While human feedback is often described as \enquote{high-noise, low-bias}, AI feedback is the opposite: \enquote{low-noise, high-bias}. The AI labeler provides consistent feedback, but if its underlying principles are flawed, RLAIF will consistently and efficiently instill those flaws into the main model \cite{bai2022constitutional}. Moreover, the efficacy of CAI is entirely contingent on the quality of its constitution. This has spawned a new, third-order research direction focused on the constitutional design problem. How should principles be selected and phrased? How do different principles interact? Recent work has begun to tackle this \enquote{meta-alignment} challenge. \citet{kyrychenko2025c3ai} explore frameworks like C3AI to systematically craft and evaluate constitutions before fine-tuning. \citet{glaese2022inverse} explore Inverse Constitutional AI, which attempts to automatically derive a human-readable constitution from an existing dataset of human preferences, thereby making the underlying values of a dataset more transparent.

\subsection{Reinforcement learning for reasoning}
While RLAIF addresses the scalability of supervision, another frontier of research is tackling its granularity. For tasks that require complex, multi-step reasoning, such as solving mathematical proofs or writing code, rewarding only the final outcome is an inefficient signal. This has led to a critical shift from outcome supervision to process supervision, an approach that focuses on aligning the model's intermediate steps of reasoning, or its "chain of thought".

In complex reasoning tasks, a correct answer can be reached via flawed logic (lucky guess), while a correct reasoning process can be derailed by a minor error. \citet{uesato2022solvingmathwordproblems} coin the terms Outcome supervised reward models (ORMs), which provide a single reward based on the final result, and Process supervised reward models (PRMs), which provide feedback for each intermediate step in a model's generated chain of thought. They find that while supervising only the final answer (outcome-based) is sufficient for achieving a low final-answer error rate, it is not enough to guarantee correct reasoning. To ensure the step-by-step reasoning is also correct (low trace error), process-based supervision is necessary. The researchers found that reward models trained only on final outcomes can surprisingly learn to approximate process-based feedback, which helps reduce reasoning errors.

\citet{lightman2023process} conduct another study comparing ORMs and PRMs with a more powerful base model, more reasoning annotation and a more challenging MATH test data. Key findings from the paper include:
\begin{itemize}
    \item Improved Performance: A model trained with process supervision, which provides feedback on each step of a problem-solving process, solved 78.2\% of problems from a challenging math dataset. This outperformed models trained with outcome supervision, which only gives feedback on the final answer.
    \item Greater Reliability: Process-supervised reward models (PRMs) were better at identifying correct solutions, especially as the number of possible solutions increased. This indicates that PRMs are more effective for searching through many potential answers.
    \item Active Learning Efficiency: The study demonstrated that "active learning," a strategy for selecting the most informative problems for human review, can make the process of training with human feedback 2.6 times more efficient.
    \item Handling Flawed Reasoning: Process supervision helps models avoid reaching the correct answer through incorrect reasoning, a common issue with outcome-supervised models.
\end{itemize}

Building on this, \citet{li2024pspo} introduce PSPO (Process-Supervised Policy Optimization), arguing that the reward for a reasoning chain is a nonlinear function of both its accuracy and its length. Reasoning that is too terse may be incomplete, while reasoning that is too verbose may be redundant and introduce noise. PSPO proposes a universal framework for nonlinear reward shaping. \citet{guan2025deliberativealignmentreasoningenables} extends process supervision directly into the domain of safety, with deliberative alignment. This approach explicitly trains a model to reason through the text of human-written safety specifications before generating a response. The model's chain of thought, which includes this deliberation over safety rules, is directly supervised during SFT and then refined with RL. This approach offers a highly scalable method for safety alignment, as it can be used to generate vast amounts of synthetic training data. This moves safety alignment from simply avoiding bad outputs to actively reasoning about and applying safety principles, a much more robust and generalizable form of safety. \citet{gao2025flower} propose using generative flow networks (GFlowNets) to provide process-level supervision. This approach assigns token-level rewards to align token generation probabilities with their reward signal, helping mitigate popularity bias and enhancing fairness and diversity. \citet{mou2025saroenhancingllmsafety} introduce Safety-oriented Reasoning Optimization framework which applies a two-stage process-based approach to safety, first using supervised fine-tuning to warm up the model's ability to produce long-chain reasoning, and then using a process-based optimization stage to align that reasoning with specific safety policies.

If this approach of aligning the reasoning process generalizes beyond mathematics and safety, it suggests a future where the distinction between enhancing capabilities and ensuring alignment begins to dissolve. Training a model to become a better reasoner and training it to be safer and more interpretable could become two sides of the same coin, with the potential to minimize the safety tax and create a unified training process.

\subsection{Game theory and self-play}
Game theory and reinforcement learning have a long and intertwined history. Among other things, game theory provides a natural mathematical framework for multi-agent reinforcement learning problems. \citet{wu2024selfplaypreferenceoptimizationlanguage} argue that standard methods for reinforcement learning from human feedback (RLHF) don't fully capture the complexities of human preferences, which can be inconsistent or irrational. To address this, SPPO treats language model alignment as a two-player game, aiming to find the best possible policy, known as the Nash equilibrium. The method uses an iterative process where a language model plays against a previous version of itself to improve. A key contribution is a new and theoretically-grounded objective for this process that is also simple and effective. Its objective function is more expressive than DPO's; while DPO's loss only maximizes the log-probability gap between chosen and rejected responses, SPPO's objective can simultaneously increase the log-likelihood of preferred responses while actively decreasing the log-likelihood of dispreferred ones.

In experiments, the researchers fine-tuned the Mistral-7B-Instruct-v0.2 and Llama-3-8B-Instruct models using the SPPO method. They used prompts from the UltraFeedback dataset and a pre-trained preference model called PairRM. The results showed that SPPO significantly outperformed other methods like Direct Preference Optimization (DPO) and Identity Preference Optimization (IPO) on several benchmarks, including Alpaca Eval 2.0, MT-Bench, and Arena-Hard. Notably, a fine-tuned Mistral-7B model achieved a state-of-the-art length-controlled win-rate of 28.53\% against GPT-4-Turbo on Alpaca Eval 2.0, and a fine-tuned Llama-3-8B model reached a win rate of 38.77\%. This strong performance was achieved without needing direct supervision from more advanced models like GPT-4.

The broader concept of self-alignment through interaction is being explored in other contexts as well, such as simulating multi-agent social scenarios to teach a model about the downstream consequences of its responses, or using self-generated role-play dialogues to improve a model's character-impersonation abilities. \citet{pang2024selfalignmentlargelanguagemodels} introduce MATRIX, a self-alignment framework where a large language model (LLM) simulates social scenes to understand the consequences of its potential responses. Through a "monopolylogue" where the LLM plays all relevant roles, it generates consequence-aware data that is then used to fine-tune the model itself, achieving value alignment without external supervision. The authors demonstrate that their tuned 13B model surpasses GPT-4 in alignment with human values, as rated by human evaluators. \citet{lu2024largelanguagemodelssuperpositions} introduce DITTO, a self-alignment method that enhances the role-playing capabilities of large language models by avoiding distillation from proprietary models. The proposed method leverages knowledge augmentation from public databases to simulate role-play dialogues, reformulating the task as a reading comprehension exercise to fine-tune the model. This approach was shown to significantly improve the consistency of role identity and the use of accurate and role-specific knowledge, achieving performance comparable to advanced proprietary chatbots.

As LLMs achieve superhuman capabilities, it will become impossible for humans to provide reliable supervision. Debate is a mechanism proposed for scalable oversight, designed to allow weaker supervisors to evaluate the outputs of stronger expert AIs. The premise is that it is easier to identify the stronger argument in a competitive debate than it is to generate the correct answer. In a typical debate, two expert models argue for opposing answers to a question, and a judge declares a winner based on the transcript. A study by \citet{khan2024debating} found that this adversarial setup significantly improves the accuracy of both human and AI judges. A key finding from their research is that optimizing debaters to be more persuasive also leads to more truthful outcomes, suggesting that truth may be inherently easier to argue for. This provides a potential unsupervised signal for alignment: training models to win debates may also train them to be more truthful. \citet{arnesen2024traininglanguagemodelswin} show that that training models to win debates via self-play can also improve judge accuracy.

These methods blur the line between alignment and capabilities research, suggesting a future where the most powerful techniques for controlling AI systems may be the very same adversarial and self-referential processes used to develop their intelligence.

\subsection{Other improvements in offline policy optimization}
Although DPO represents a significant simplification over on-policy RL algorithms, its reliance on a specific data format, pairwise preferences, has limitations for certain types of tasks. This has spurred a re-convergence with the broader field of deep reinforcement learning, as researchers adapt more traditional RL concepts to create a new generation of offline algorithms tailored for the unique challenges of LLM alignment, particularly in complex reasoning domains.

\citet{wang2024offlinereinforcementlearningllm} introduce Offline Reasoning Optimization (OREO), an offline reinforcement learning algorithm designed to enhance the multi-step reasoning of large language models. The method addresses the limitations of Direct Preference Optimization (DPO), which is less suitable for reasoning tasks due to its reliance on paired preference data and ineffective credit assignment with sparse rewards. OREO overcomes this by jointly learning a policy and a value function through a technique which does not require pairwise data. The authors demonstrate that OREO outperforms existing offline learning methods on mathematical reasoning (GSM8K, MATH) and embodied agent control (ALFWorld) tasks. Furthermore, the value function learned during training can be used to guide the tree search during inference for improved performance at no additional training effort.

The development of OREO is part of a broader trend of creating more sophisticated offline RL algorithms for LLMs. \citet{brantley2025acceleratingrlllmreasoning} propose A*-PO, a two-stage policy optimization framework designed to accelerate the reinforcement learning (RL) process for fine-tuning large language models on reasoning tasks. The method first estimates the optimal value function offline, then uses a simple regression loss for on-policy updates, requiring only a single generation per prompt. This approach achieves comparable or superior performance to existing methods like PPO and GRPO while reducing training time by up to 2x and peak memory usage by over 30\%.

\citet{roux2025taperedoffpolicyreinforcestable} propose Tapered Off-Policy REINFORCE (TOPR), an algorithm that uses an asymmetric, tapered form of importance sampling to stably and efficiently fine-tune large language models with reinforcement learning. The method effectively leverages both positive and negative examples to improve performance without requiring KL regularization.

\section{New research ideas}\label{sec:newresearchideas}
In this section, we present some new ideas to investigate. These ideas are presented as a single new methodology; however, it is simply a set of techniques that can be implemented and investigated independently.

\subsection{Generalized relative advantage policy evolution (GRAPE)}
One of the challenges of preference rankings is that the reward model must learn to pick the preferred response on a variety of prompts in the data with different expectations. Here we propose an approach that combines the elements of RLHF and RLAIF, eliminating the need for training a value model or reward model, but still optionally allowing for fine-tuned reward models to incorporate human feedback.

Let's consider the sample of questions; let's say we have $Q$ questions as discussed before. Let's group each of the $Q$ questions into capability or question type e.g. coding, math, safety, etc.. such that we have $K$ categories of questions. Next, we do the following things:

\begin{enumerate}
    \item Write a system prompt for the model to generate the best responses for each category of questions, e.g. \enquote{You are a highly skilled and professional coding assistant. Your primary function is to help users with a wide range of programming tasks...} \label{enit:simpleresponsegenerate}
    \item Write a system prompt for the model to generate an improved response to a given question provided the response, and a critique with reasoning for the response score.\label{enit:responseiterate}
    \item Write a separate rubric for each category to evaluate the response. For example, for coding questions:
\begin{itemize}
    \item \emph{Best Practices}: The code follows common conventions and best practices for the specified programming language (e.g., Python's PEP 8).
    \item \emph{Valid libraries}: The response does not invent non-existent libraries, functions, or methods.
    \item \emph{Correctness}: The provided code runs without errors and produces the correct result for standard test cases.
    \item \emph{Completeness}: The solution addresses all parts of the user's question and respects all given constraints.
    \item \emph{Handles Edge Cases}: The solution correctly processes edge cases, such as empty inputs, null values, or zeros.
    \item ...
\end{itemize}
\item A weight for each rubric item, providing a sense of relative importance of that item within the rubric of that category. This is needed because some rubric items may be foundational to a response e.g. checking correctness of a math question is likely more important than clear styling of the proof. As such any aggregation of individual rubric scores to get a final score for the response will need to be a weighted one. Suppose a category has $\rho$ total number of individual rubric items in the rubric, then let the weights be $\omega_1,\omega_2,\ldots,\omega_{\rho}$.
\item For each rubric item of each category, write an \enquote{scoring flow} based on whether the item is verifiable or non-verifiable:
\begin{enumerate}
    \item \emph{Verifiable rubric items}: Where there is a verifiable ground truth (e.g. correct answer to a math problem), ask a model to compare the response to the ground truth and return a correctness score between 0 and 1 in this format:\begin{enumerate}\label{enumitem:confscore}
        \item Reasoning for the score
        \item Score
        \item Confidence in the score
    \end{enumerate}
    \item \emph{Non-verifiable rubric items}: Non-verifiable conditions can sometimes be decomposed into verifiable conditions with some residual non-verifiable parts. We call this \enquote{atomization} of rubric. An example of this is when we are trying to check if a code appropriately handles edge cases. One can add a number of verifiable rubrics such as \enquote{Ensure the program does not crash when input is null} to reduce the non-verifiable surface area of the original rubric item. We recommend atomization where possible. Nonetheless, you are always left with non-verifiable rubric items. In these cases, the scoring flow looks something like this:
    \begin{enumerate}
        \item Prepare a critique system prompt for the rubric item, something like \enquote{You are a highly experienced and meticulous code style critic. Your role is to analyze a given code block and provide a detailed, objective critique of its stylistic qualities...}. We provide an example of this in Appendix~\ref{sec:critiquesysprompt}
        \item Provide the question, the response, the rubric, and the critique to a model to return a success score between 0 and 1 in the same format as in \ref{enumitem:confscore} above.
    \end{enumerate}
\end{enumerate}
\end{enumerate}

That's a lot of upfront work. But all of this allows us to create pipeline that takes us from generation to scoring of the responses. Let's write down how we will use these. As noted before we have $Q$ questions and let's say we generate $G$ responses for each question, thus giving us $S=QG$ samples if $\vec{text}$ where each $\vec{text}$ is the concatenation of the question and the response.

First thing to note is that the $G$ responses for each question can be generated in two ways:
\begin{itemize}
    \item By using the model and system prompt in \ref{enit:simpleresponsegenerate} above
    \item By providing a response from previous iteration and using \ref{enit:responseiterate} above
\end{itemize}
While for the first set of responses we would need to use \ref{enit:simpleresponsegenerate}, for further generations we recommend choosing between \ref{enit:simpleresponsegenerate} and \ref{enit:responseiterate} with some probability, which can be tweaked through experimentation. You could also have a fixed proportion of the $G$ generations coming from each of the two methods.

Once we have the generations, we can obtain the i) reasoning, ii) score, and iii) confidence for all the rubric items for all samples. Now we need to aggregate this information in a way so as to write a differentiable loss function that can help us update model parameters. Let's take stock of what we have so far:
\begin{itemize}
    \item We have $S=QG$ samples of question-response pairs, i.e., $\vec{text}$, where each one belongs to one of the categories $K$
    \item We have a rubric applicable to each sample $\vec{text}$ (depending on the category the question belongs to). Let's say for $\vec{text}_i$ there are $\rho_i$ individual rubric items, then we also have weights for each item $\omega_1,\omega_2,\ldots,\omega_{\rho_i}$ that add up to one.
    \item For each $\vec{text}_i$ we have the reasoning string $\Psi_{ij}$, the score $\tau_{ij}$ and the score confidence $\varphi_{ij}$ for the $j^{th}$ rubric item corresponding to the category of that question
\end{itemize}

Now, Let $\xi(\vec{text}_i)$ represent the set of the indices of all samples in $S$ such that they belong to the same category as $\vec{text}_i$ does. This means that for any $k\in \xi(\vec{text}_i)$ the questions from which $\vec{text}_i$ and $\vec{text}_k$ were generated belong in the same category in $K$. They may or may not be the same question, but the questions are of the same type, e.g. if $\vec{text}_i$ comes from a math question then $\xi(\vec{text}_i)$ is the set of all math samples. Let $\bar{\varphi}_{ij}$ be the average confidence of a particular rubric item across all samples:
\begin{align*}
    \bar{\varphi}_{ij} &=    \frac{\sum_{k \in \xi(\vec{text}_i)} \varphi_{kj}}{|\xi(\vec{text}_i)|}
\end{align*}
where $|\xi(\vec{text}_i)|$ denotes the number of elements in $\xi(\vec{text}_i)$. This means $\bar{\varphi}_{ij} = \bar{\varphi}_{kj}$ for all $i$ and $k$ in the same category (i.e. they have the same evaluation rubric).

Instead of defining the loss from start, let's start by defining an advantage function that can be used with the any of the existing update algorithms (e.g. PPO). First, we will aggregate the scores into a reward function of sorts.
\begin{align}
    R(\vec{text}_i) &=  \frac{\rho_i\sum_{j=1}^{\rho_i}\omega_{ij} \tau_{ij} \bar{\varphi_{ij}}}{\sum_{j=1}^{\rho_i} \bar{\varphi_{ij}}} \\
\end{align}
Using this reward function, we can write the advantage function same as in GRPO:
\begin{align*}
    A(\vec{text}_{qg}) = R(\vec{text}_{qg}) - \frac{1}{G}\sum_{i=1}^G R(\vec{text}_{qi})
\end{align*}
And the loss function can similarly be the same as GRPO/PPO when written in terms of the advantage function. This advantage function is not defined at the token level (same as GRPO), and only at the level of the whole sample. One potential experiment to conduct would be to take advantage of this and simplify the PPO loss to apply clipping at the aggregate sample level rather than token level, something like this:
\begin{align}\label{eq:altgrapeloss}
\text{Alt. GRAPE Loss} = - \frac{1}{S}\sum_{i=1}^{S} \min & \left[ \frac{\pi_1(\vec{text}_{i})}{\pi_0(\vec{text}_{i})}  A(\vec{text}_{i}) ,\right.\\
&  \left. \ \ \ \text{CLIP}\left( \frac{\pi_1(\vec{text}_{i})}{\pi_0(\vec{text}_{i})}, 1-\epsilon,1+\epsilon \right)A(\vec{text}_{i}) \right]\nonumber
\end{align}
Though if you are experimenting with GRAPE, I would start with a more direct implementation with standard PPO package (same as GRPO) rather than attempt to make all the changes at the same time, especially since this loss function can increase the chances of model collapse due to the relaxed clipping constraint.

GRAPE is a generalized framework for reinforcement learning for model alignment. The GRPO algorithm can be considered a special case of GRAPE with a single rubric (human preference) with the reward model being used as the scoring model with no reasoning and assuming all confidence scores as 1. In general, GRAPE allows for numerous adaptations and iterative improvements. Here is a discussion of the features of GRAPE:
\begin{itemize}
    \item \emph{Human Feedback Integration:} Human preferences can be incorporated by creating a specific rubric item for it. A standard reward model, trained on human rankings, can then serve as the scoring model for that item, providing the reasoning, score, and confidence level.
    \item \emph{Use of SFT Data:} Existing Supervised Fine-Tuning (SFT) data can be seamlessly included by treating SFT responses as one of the multiple responses generated for a given question during the evaluation process.
    \item \emph{Continuous Improvement:} The framework is built for iteration. Key components like evaluation rubrics and the system prompts used for generation can be continuously updated and refined to enhance model performance over time. New rubric items can be easily added.
    \item \emph{Iterate on response generation:} One can experiment with and improve generation strategies without altering the scoring mechanism, and vice-versa.
    \item \emph{Reward fine tuning:} The framework allows for fine-tuning the scoring models. This means over time capabilities of interest can be further researched and improved by fine-tuning corresponding models with human data.
    \item \emph{Reuse of critique and reasoning:} GRAPE efficiently uses critique and reasoning for scoring by allowing them to be reused for follow-up generations in a loop.
\end{itemize}
In summary, GRAPE's structure breaks down model alignment into manageable parts, enabling targeted, independent, and continuous improvement of language models.

\subsubsection{Why use confidence when aggregating scores?}
When we ask the scoring model to provide a score we ask for three things:
\begin{enumerate}
    \item Reasoning for the score
    \item Score
    \item Confidence in the score
\end{enumerate}
One of the things we do is use the confidence scores when aggregating the scores for each rubric item into a single score for the sample. Why do we do this? When we ask models to provide confidence in rubric scores, low confidence is an indicator of a poor quality rubric item. Generally, one can identify poor rubric items for manual review by looking at corresponding average confidence scores for those items. As such, by assuming that confidence is inversely proportional to the score variance, we can improve the variance of the estimate of the underlying capability. We show a weaker result in Lemma~\ref{lemma:confweighted}. This is because variance does not reduce for all possible values of rubric weights.

\section{Conclusion}
The journey to align large language models with human intent has evolved rapidly, moving from the straightforward objective of supervised fine-tuning to the sophisticated dynamics of reinforcement learning. This paper has traced this evolution, starting with foundational techniques like SFT and rejection sampling, and progressing to the development of robust reinforcement learning from human feedback (RLHF) frameworks. Methods like PPO and DPO have become cornerstones of modern alignment, providing the tools to steer models using nuanced human preferences while mitigating critical issues like model collapse.

\begin{table}[h!]
\label{tab:rl_approaches}
\centering
\begin{tabularx}{\textwidth}{|l|X|X|X|} 
\hline 
\multicolumn{1}{|c|}{\textbf{Approach}} & \multicolumn{1}{c|}{\textbf{Core Objective}} & \multicolumn{1}{c|}{\textbf{Algorithms}}  & \multicolumn{1}{c|}{\textbf{Solves for}} \\ 
\hline
RLAIF / CAI & Scale supervision by replacing human feedback with principled AI feedback. & Constitutional AI, d-RLAIF, Curriculum-RLAIF, HRLAIF & Effort, speed, and scalability of RLHF. \\
\hline
Process Supervision & Align the intermediate reasoning steps, not just the final output. & PRMs, PSPO*, Deliberative Alignment & Poor credit assignment and uninterpretability of outcome-based rewards in complex reasoning. \\
\hline
Multi-Agent / Self-Play & Generate alignment signals through dynamic, interactive processes. & Debate, Self-Play Preference Optimization (SPPO) & Need for ground-truth labels in superhuman domains; intransitivity of simple preference models. \\
\hline
Advanced Offline RL & Learn from offline datasets with sparse, terminal rewards. & OREO, A*-PO, TOPR & Inapplicability of preference-pair methods like DPO to many tasks. \\
\hline
\end{tabularx}
\caption{A summary of emerging alignment approaches, their core objectives, and the limitations they are designed to address.}
\end{table}

However, as models grow in capability, the limitations of these established methods have become apparent, pushing the research frontier. The future of model alignment is being shaped by a move away from static, human-intensive supervision and toward more scalable, granular, and dynamic approaches. These emerging approaches represent a fundamental shift in how we think about alignment. The research frontier is moving along three interconnected axes: from human to AI-driven supervision, from holistic outcomes to fine-grained processes, and from static datasets to dynamic, interactive environments. RLAIF addresses the critical bottleneck of scalability by leveraging AI to generate feedback. Process supervision tackles the challenge of complex reasoning by rewarding the 'how' not just the 'what,' ensuring that models arrive at correct answers through sound logic. At the same time, game theory and self-play offer a path to supervising models that exceed human capabilities, using adversarial setups like debate to generate reliable training signals. Finally, a new generation of advanced offline RL algorithms is being developed to be more efficient and better suited to the unique constraints of LLM training.

In this context, we introduced GRAPE (Generalized Relative Advantage Policy Evolution), a novel framework designed to synthesize these advancements. By using detailed, category-specific rubrics and integrating AI-generated critiques, GRAPE offers a modular and transparent pathway for continuous model improvement, eliminating the need for separate value and reward models while retaining the flexibility to incorporate human oversight.

Ultimately, one of the things that is becoming more clear is that the problem of model alignment is possibly best addressed by making models better at reasoning, rather than treating it as a sort of a balancing act between various capabilities (e.g. math vs safety).

\bibliographystyle{chicago}
\bibliography{references}

\section{Appendix}

\subsection{Negative Log-likelihood (NLL)}\label{sec:nll}
Imagine that $p(x)$ is a probability distribution function. Here, $x$ can be anything, it could be a number, or a vector, or some other irregular-dimensional object. If $p$ is continuous, what this means is that $\delta\cdot p(x)$ is the probability that a random sample drawn from $p$ will be in the $\delta$ neighborhood of $x$ for vanishingly small $\delta$, and our sums below will become integration. Now let's write down the probability of a sample $\boldsymbol{x} = (x_1,x_2,x_3,\ldots,x_n)$ with $n$ observations. We will make the simplifying assumption that each sample drawn from the probability distribution is independent of another:
\begin{align*}
    P(\boldsymbol{x} | p) = \prod_{i=1}^n p(x_i)
\end{align*}
Now, the way real world is set up is that we often don't know the true distribution $p$. What we do have are the observations $\boldsymbol{x}$, and what we are really trying to do is generalize the nature of the world in some way that we can make useful predictions about it by learning from these observations $\boldsymbol{x}$. If we could figure out $p$, we'd be set. So in summary:
\begin{align*}
    \text{What we have: }& \boldsymbol{x}\text{, the observations from real world}\\
    \text{What we want: }& p \text{, the underlying probability distribution that generated the observations}
\end{align*}
Since the real world is infinitely complex, this is a hopeless pursuit. But, can we recover perhaps a simpler model of the world that may be useful in making some relevant predictions? Let's call this distribution $q$, and basically $q$ is something we will iterate on and try to get as close to $p$ as possible. If $q$ were the real state of the world, the probability of $\boldsymbol{x}$ under $q$ would be:
\begin{align*}
    P(\boldsymbol{x} | q) = \prod_{i=1}^n q(x_i)
\end{align*}
We could now iterate over $q$ to maximize this probability and we would be aligning our assumed model of the world to what we have really observed and hopefully $q$ will be a decent approximation of $p$. In this case, since we are iterating over $q$ and $\boldsymbol{x}$ is fixed, we could also interpret this probability as ``Likelihood of $q$ being the true state of the world given that $\boldsymbol{x}$ is what we have observed in reality''. This allows us to see world as a fixed reference point and change $q$ to align with reality. This is often written as 
\begin{align*}
 \mathcal{L}(q | \boldsymbol{x}) = P(\boldsymbol{x} | q) = \prod_{i=1}^n q(x_i)
\end{align*}
and the math doesn't change at all, this is just a matter of interpretation. One practical consideration is that these probabilities can be small, so multiplying them often leads to extremely small numbers (e.g. if an LLM vocabulary is 32k then probability of each token can be $\frac{1}{32000} = 3.13\times10^{-5}$ and if you have a sentence with 10 tokens the multiplying these 10 times puts you around $10^{-50}$.). To make computations manageable we can take a log, which also has the nice property that our product will turn into a sum (e.g. $\log{\left(\frac{1}{32000}\right)} = -10.37$ on the other hand, and much easier to manage - base here is $e$). Since log is a strictly monotone increasing function, we can maximize the log of probability instead of probability and it would amount to the same thing. Now, since probability is always less than 1, the log values will always be negative. What if we multiplied everything with a negative symbol to turn all the log values into positive numbers, with a wonderful side-effect that the maximization problem now turns into a minimization problem. So what we are really trying to do while finding a good $q$ is minimize the negative log-likelihood (NLL):
\begin{align*}
    \text{Minimize: NLL}(q|\boldsymbol{x}) =  -log\left(\mathcal{L}(q | \boldsymbol{x}) \right) = -\sum_{i=1}^n log\left(q(x_i)\right)
\end{align*}
The goal is to get the NLL of $q$ down as close as possible to the NLL of true distribution, $p$. We must keep the specification of $q$ sufficiently general, otherwise we risk overfitting to the observed sample while driving down the NLL. Likelihood was a concept introduced by \citet{fisher1922mathematical}.
\begin{quote}
\centering
Low NLL $\simeq$ High Likelihood $\simeq$ High probability    
\end{quote}

\subsection{Entropy}\label{sec:entropy}
Entropy is sort of how much ``randomness" is in a system. So for example, with the distribution $p$ that the world actually follows (we don't know $p$, of course, but we assume it exists), we wrote down the probability of sample $\boldsymbol{x}$ earlier. What's a reasonable definition of randomness when it comes to a probability distribution? What we are trying to capture is how deterministic is the realized value of a random variable. If the realized value is highly deterministic, let's say the random variable always take a fixed value of 1, then entropy should be zero. In this case the entire probability mass is centered on a single number. On the other hand, if it can take two values, say 0 and 1 with equal probability, then there is more entropy, but it is still contained. Each of 0 and 1 has a 0.5 probability of occurring. If we rolled a dice, then each number would have a probability of $1/6$ occurring, and a dice has more randomness. If we have many values, each one with a low probability, then the entropy is high. The lower this probability, the higher the entropy. To define entropy we could just take negative log likelihood of this probability. So in the first case that would be $-\log(1)=0$, in the second case it would be $-\log(1/2)= 0.69$ and in the case of the dice it would be $-\log(1/6)=1.79$. So we are getting somewhere. We have a way of defining entropy at least for these simple cases.

But not all distributions are uniform in the sense that each value a random variable can take does not occur with same probability. What if we have a loaded dice where probability of some numbers was higher than others? So what we want to do in the general case is take expectation of this NLL over all the values the random variable can take. 
\begin{align*}
    \text{Entropy} &= \text{Expected value of} -log(x) \text{ over all }x \\
        &= -\sum_x p(x)log(p(x))
\end{align*}
Where the sum is over all possible $x$. So, entropy is simply the expected value of NLL of the true distribution of the system. If the entropy is close to zero (and therefore the NLL is), you know $p$ fairly deterministically. In practice, what this means is that if you draw a sample and don't see much randomness (e.g. the sun has been rising to the east for several millennia) you can be reasonably sure of the underlying distribution $p$ (i.e. the sun should rise to the east tomorrow). On the other hand if the likelihood is low, you don't really have confidence in your $p$.

The concept of entropy originated in thermodynamics, where it was first introduced by \citet{clausius1865} as a measure of energy dispersal and process irreversibility. This macroscopic view was later given a microscopic, statistical foundation by \citet{boltzmann1877}, who related entropy to the number of possible microstates of a system. The concept found a powerful new application in the 20th century when \citet{shannon1948mathematical} independently developed a parallel formulation for information entropy, quantifying uncertainty and information content in communication.

\begin{quote}
\centering
Low NLL $\simeq$ High Likelihood $\simeq$ High probability $\simeq$ High certainty $\simeq$ Low randomness
\end{quote}

\subsection{Cross entropy}\label{sec:crossentropy}
In practice, we don't really know $p$, the true distribution. What we are trying to do is calculate the NLL of $q$ and then minimize that to get as close to $p$ as possible. And in doing so, what we are doing is calculating the NLL using samples from the real world. These samples are drawn from the true distribution of the real world - $p$. As such, if we were to calculate the expected value of the NLL that we are trying to optimize, that would be cross entropy, and would come out to be something like this:
 \begin{align*}
 \text{Cross entropy} &= \text{Expected value of NLL$(q|x)$ when real distribution is $p$}\\ &= - \sum_{x} p(x) log(q(x))
 \end{align*}
 So basically, what entropy is measuring is expected NLL of true distribution when we draw a sample. What cross entropy is measuring is expected NLL of our approximate distribution when we draw a sample. Cross entropy can never be less than entropy, a fact that follows from our comment about NLL above but which we will not prove here. What you really want is to minimize the cross entropy value to get as close as possible to the entropy so as to get $q$ close to the true distribution $p$.

 But, you cannot really calculate cross entropy since the true distribution $p$ isn't known. So you can only approximate it with the observed samples. For example, if you got a sample $x_1, x_2,\ldots, x_n$ then you can assume that they have probability $p(x_i)=1$ under true distribution (since that is what we observed) and calculate cross entropy which is, of course the same thing as NLL of $q$ (since we got to cross entropy by taking expectation of NLL in the first place). 
\begin{align*}
    \text{Minimize: Cross entropy loss = } -\sum_{i=1}^n log\left(q(x_i)\right) = -log\left(\mathcal{L}(q | \boldsymbol{x}) \right) = \text{NLL}(q|\boldsymbol{x})
\end{align*}
 
 As such, when we use cross entropy loss what we are really using is the NLL of our model $q$, and as such the term negative log-likelihood more accurately describes the cross entropy loss. The concept of cross-entropy originates from the foundational principles of information entropy introduced by \citet{shannon1948mathematical} and was more formally derived from the related work on relative entropy by \citet{kullback1951information}. Its modern application as a pivotal loss function in machine learning is anchored by work on neural network training and model distillation, as exemplified by \citet{hinton2015distilling}.

\subsection{KL Divergence}\label{sec:kldivergence}
Now that we understand entropy and cross entropy, KL divergence, due to  \citet{kullback1951information}, is fairly straightforward to understand. It is simply the difference between cross entropy and entropy:
\begin{align*}
    \text{KL Divergence $(p \| q)$} &= \text{Cross entropy - entropy}\\
    D_{KL}(p\|q)&= \left(- \sum_{x} p(x) log(q(x))\right) - \left(- \sum_{x} p(x) log(p(x))\right)\\
    D_{KL}(p\|q) &= \sum_x p(x) log\left(\frac{p(x)}{q(x)}\right)
\end{align*}
What KL divergence measures is how much expected NLL is higher under the approximate distribution $q$ compared to the true distribution $p$. In essence, KL divergence is a measure of how much extra we are paying (in terms of NLL) by assuming the world behaves like $q$ when the true distribution is $p$. Since entropy of the system is fixed, minimizing cross entropy is the same thing as minimizing KL divergence. 

KL divergence is a measure of how far a distribution $q$ is from the real distribution $p$. If we look at the terms we can see that KL divergence will heavily penalize situations where the distribution $q$ assigns a very low probability to common events on the real distribution $p$ (since $q(x)$ will go to zero, making the term go to infinity). This can cause $q$ to distribute probability across the possibilities and explore more. KL divergence does not penalize nearly as much if $q$ also assigns some probability to parts of distribution that are highly unlikely in the true distribution $p$ since the term simply goes to zero.

In certain situations, you find yourself sampling from distribution $q$ rather than $p$ and you still would like to have a measure of distance of the two distributions $p$ and $q$. In such cases, you use KL divergence with the roles of $p$ and $q$ flipped such that 
\begin{align*}
    D_{KL}(q\|p) &= \sum_x q(x) log\left(\frac{q(x)}{p(x)}\right)
\end{align*}
This is sometimes also called reverse KL divergence (i.e. reverse of $D_{KL}(p\|q)$). The effect of this is that the formulation penalizes $q$ heavily for assigning probability to events that are highly unlikely under $p$ (since $p(x)$ goes to zero). This has the opposite effect of making $q$ focus on known areas in $p$.

\subsection{Importance Sampling}\label{sec:importancesampling}
Consider the situation where you are interested in estimating the average height of a person at a basketball event. The event consists of about 5\% of pro basketball players and 95\% people from the general population. You are able to get the height of about 200 randomly selected pro-players at the event, from publicly available data about them (let's say $x_1,...x_{200}$). You are also able to get the height of about 100 randomly identified participants at the event (let's say $x_{201},...,x_{300}$). How do you calculate the average height? You know that taking a simple average of the 200 numbers is going to be wrong, since there are far fewer pro basketball players at the event than the general population, you know that pro basketball players are taller as a rule, and you know that only 5\% of the people at the event are pro players. Assigning those 100 pro-player samples a weight of 50\% (as would happen if you were to simply average all samples) would be wrong. What is the simplest thing you can do?

A reasonable thing to do would be to weight the samples based on how much of that is present in the full event population. We know the average height of pro-players is average of $x_1,...,x_{200}$ and the average height of everyone else is average of $x_{201},...,x_{300}$. A good estimate of the overall average would be to weight the average of the first sample at 5\% and the second one at 95\% and so:
\begin{align*}
    Estimate &= 0.05\left(\frac{x_1+x_2+...+x_{200}}{200}\right) + 0.95\left(\frac{x_{201}+x_{202}+...+x_{300}}{100}\right)
\end{align*}
Let's formalize this a little bit. Let $P$ be the original distribution of pro-players and non-players in the conference and since we have 5\% pro players: $P(pp)=0.05$ and $P(np)=0.95$. Let's also use $Q$ to denote the distribution which we used to sample. Since we have 200 pp and 100 np the probability distribution that $Q$ follows is $Q(pp)=\frac{2}{3}$ and $Q(np)=\frac{1}{3}$. Let's try to rewrite the above formula in terms of $P$ and $Q$:
\begin{align*}
    Estimate &= 0.05\left(\frac{x_1+x_2+...+x_{200}}{200}\right) + 0.95\left(\frac{x_{201}+x_{202}+...+x_{300}}{100}\right)\\
    &= \frac{0.05}{200}\sum_{i=1}^{200} x_i + \frac{0.95}{100} \sum_{i=201}^{300} x_i\\
    &= \frac{1}{300}\left[ \frac{0.05}{\frac{2}{3}}\sum_{i=1}^{200} x_i + \frac{0.95}{\frac{1}{3}} \sum_{i=201}^{300} x_i \right]\\
    &= \frac{1}{300}\left[ \frac{P(pp)}{Q(pp)}\sum_{i=1}^{200} x_i + \frac{P(np)}{Q(np)} \sum_{i=201}^{300} x_i \right]\\
    &= \frac{1}{300}\left[ \sum_{i=1}^{200} \frac{P(pp)}{Q(pp)} x_i +  \sum_{i=201}^{300} \frac{P(np)}{Q(np)} x_i \right]
\end{align*}
This is the core idea of importance sampling where you sample from a distribution $Q$ (either because it is more convenient or $P$ is hard to sample from) and you can get an estimate at population level by weighing the samples appropriately by the ratio of the two distributions. Let's generalize this further. Let's assume that we have $N$ observations total sampled from the population following the distribution $Q$ where $Q(x_i)$ is the probability of $x_i$ being sampled, and $P(x_i)$ is the true probability. Then we can write the above formula as:
\begin{align}\label{eq:importancesampling}
    Estimate &= \frac{1}{N}\left[ \sum_{i=1}^{N} \frac{P(x_i)}{Q(x_i)} x_i \right]
\end{align}
This is importance sampling. The concept of importance sampling as a variance reduction technique for Monte Carlo methods has its roots in the late 1940s and early 1950s, primarily driven by research in statistical physics. \citet{kahn1949random} laid the foundational groundwork for the method, demonstrating how to use a biased sampling distribution to more efficiently estimate quantities of interest in neutron transport problems. This idea was further formalized and popularized in \citet{kahn1951estimation}, which provided a more rigorous mathematical framework and showcased the significant efficiency improvements achievable.
\subsection{Critique system prompt for code style}\label{sec:critiquesysprompt}
You are a highly experienced and meticulous code style critic. Your role is to analyze a given code block and provide a detailed, objective critique of its stylistic qualities depending on the programming language of the code:

\textbf{Your Persona:}
\begin{itemize}
    \item \emph{Knowledgeable:} You have a deep understanding of code conventions, design principles, and best practices across many programming languages (e.g., Python's PEP 8, JavaScript's Airbnb style guide, etc.).
    \item \emph{Objective:} Your critique is based on established principles, not personal preference. You will cite the reasoning behind your suggestions.
    \item \emph{Helpful:} Your goal is to guide the user to write more readable, maintainable, and professional code. Your tone should be constructive and encouraging.
\end{itemize}

\textbf{Core Task:}
Analyze the user-provided code block and perform a critique based on the following criteria.

\textbf{Critique Criteria:}
\begin{enumerate}
    \item \emph{Readability \& Formatting:}
    \begin{itemize}
        \item Is the code well-formatted and easy to read?
        \item Does it use consistent indentation, spacing, and line breaks?
        \item Are complex lines of code broken down logically?
    \end{itemize}
    \item \emph{Naming Conventions:}
    \begin{itemize}
        \item Are variable, function, and class names descriptive and meaningful?
        \item Does the code consistently follow a single casing style (e.g., \texttt{snake\_case}, \texttt{camelCase}, \texttt{PascalCase})?
        \item Are acronyms and abbreviations handled consistently?
    \end{itemize}
    \item \emph{Comments \& Documentation:}
    \begin{itemize}
        \item Are comments used effectively to explain \emph{why} the code does something, not just \emph{what} it does?
        \item Are complex functions or classes accompanied by appropriate documentation (e.g., docstrings)?
        \item Are comments formatted consistently?
    \end{itemize}
    \item \emph{Modularity \& Structure:}
    \begin{itemize}
        \item Is the code broken down into logical, single-purpose functions or methods?
        \item Are functions or methods excessively long?
        \item Are code blocks well-organized and clearly separated?
    \end{itemize}
    \item \emph{Robustness \& Best Practices:}
    \begin{itemize}
        \item Does the code handle potential errors gracefully (e.g., no bare \texttt{except} blocks)?
        \item Does it avoid hardcoding ``magic numbers'' or string literals?
        \item Is resource management handled properly (e.g., closing files, releasing connections)?
    \end{itemize}
\end{enumerate}

\textbf{Output Format:}
Your response must be structured clearly:
\begin{itemize}
    \item \emph{Summary:} Start with a brief, high-level summary of the code's overall style.
    \item \emph{Detailed Critique:} For each of the five criteria listed above, provide a specific critique.
    \begin{itemize}
        \item Clearly state which criterion you are addressing.
        \item For each point of critique, quote a specific line or block of code from the user's input to illustrate the issue.
        \item Explain the problem with that specific code snippet.
        \item Provide a concrete, improved example showing how to fix the issue.
    \end{itemize}
    \item \emph{Overall Recommendation:} Conclude with a final paragraph that summarizes your key recommendations and provides a final word of encouragement to the user.
\end{itemize}

\textbf{Constraints:}
\begin{itemize}
    \item Your output should be constructive, not condescending.
    \item Provide code examples for fixes.
    \item If the code is exemplary, praise its strengths and highlight which conventions it follows well.
\end{itemize}

\subsection{Proof: Variance of Weighted vs. Unweighted Average}
\begin{lemma}\label{lemma:confweighted}
If we have a sample of $\rho$ observations, $\tau_i, i \in 1,...\rho$, each with standard deviation $\sigma_i$ and we have $\bar{\varphi}_i$ inversely proportional to variance of the corresponding $\tau_i$, then:
\begin{align*}
    Var\left(\frac{1}{\rho} \sum_{i=1}^\rho \tau_i \right) &\ge  Var\left( \frac{\sum_{i=1}^\rho \tau_i \bar{\varphi}_i}{\sum_{i=1}^\rho \bar{\varphi}_i} \right) 
\end{align*}
\end{lemma}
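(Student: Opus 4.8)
The plan is to reduce the claimed inequality to a one-line application of the Cauchy--Schwarz inequality. First I would make explicit the standing assumptions that the $\tau_i$ are mutually independent, so that the variance of a linear combination is the weighted sum of the individual variances, and that ``$\bar{\varphi}_i$ inversely proportional to the variance of $\tau_i$'' means $\bar{\varphi}_i = c/\sigma_i^2$ for some fixed $c>0$. I would then observe that the right-hand quantity $\left(\sum_i \tau_i\bar{\varphi}_i\right)\big/\left(\sum_i\bar{\varphi}_i\right)$ is invariant under rescaling all the $\bar{\varphi}_i$ by a common constant, since numerator and denominator are each homogeneous of degree one in the weights; hence there is no loss in taking $c=1$, i.e.\ $\bar{\varphi}_i = 1/\sigma_i^2$.

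Next I would compute both sides in closed form. By independence the left-hand side is $\mathrm{Var}\!\left(\tfrac1\rho\sum_i\tau_i\right) = \tfrac1{\rho^2}\sum_{i=1}^\rho\sigma_i^2$. For the right-hand side, writing $W=\sum_j\bar{\varphi}_j$ and $w_i=\bar{\varphi}_i/W$ with $\sum_i w_i=1$, independence gives $\mathrm{Var}\!\left(\sum_i w_i\tau_i\right) = \sum_i w_i^2\sigma_i^2 = \tfrac1{W^2}\sum_i\bar{\varphi}_i^2\sigma_i^2$. The key simplification is that with $\bar{\varphi}_i=1/\sigma_i^2$ one has $\bar{\varphi}_i^2\sigma_i^2 = 1/\sigma_i^2 = \bar{\varphi}_i$, so $\sum_i\bar{\varphi}_i^2\sigma_i^2 = W$ and the right-hand side collapses to $1/W = \left(\sum_{j=1}^\rho 1/\sigma_j^2\right)^{-1}$, the classical inverse-variance-weighted-average variance.

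The lemma is then equivalent to $\left(\sum_{i=1}^\rho\sigma_i^2\right)\left(\sum_{j=1}^\rho 1/\sigma_j^2\right)\ge\rho^2$, which I would establish by Cauchy--Schwarz applied to the vectors $(\sigma_1,\dots,\sigma_\rho)$ and $(1/\sigma_1,\dots,1/\sigma_\rho)$: indeed $\rho^2 = \left(\sum_i \sigma_i\cdot\tfrac1{\sigma_i}\right)^2\le\left(\sum_i\sigma_i^2\right)\left(\sum_i 1/\sigma_i^2\right)$, with equality precisely when all $\sigma_i$ are equal (equivalently, this is the AM--HM inequality for the numbers $\sigma_i^2$).

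There is no real obstacle of depth here; the points requiring care are bookkeeping. The main one is verifying that the proportionality constant in $\bar{\varphi}_i\propto 1/\sigma_i^2$ genuinely cancels, which follows from the homogeneity noted above, and flagging the precise sense in which this is a ``weaker'' result than one might want: once the rubric weights $\omega_{ij}$ of the full GRAPE reward $R(\vec{text}_i)$ are reinstated, the effective combination weights are no longer the pure inverse-variance weights $w_i\propto 1/\sigma_i^2$, the Cauchy--Schwarz step no longer applies, and the variance need not decrease for every choice of $\omega$. I would state the conclusion with that scope in mind, and optionally remark that equality in the lemma corresponds exactly to the homoscedastic case $\sigma_1=\dots=\sigma_\rho$.
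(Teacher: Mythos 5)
Your proposal is correct and follows essentially the same route as the paper's proof: compute both variances under (implicit) independence, use the cancellation of the proportionality constant to reduce the weighted variance to $\left(\sum_i 1/\sigma_i^2\right)^{-1}$, and conclude via Cauchy--Schwarz applied to $(\sigma_i)$ and $(1/\sigma_i)$. Your added remarks on homogeneity of the weights, the equality case, and the scope limitation once the rubric weights $\omega_{ij}$ are reinstated are sensible refinements but do not change the argument.
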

\begin{proof} We have:
    \begin{align*}
        Var\left(\frac{1}{\rho} \sum_{i=1}^\rho \tau_i \right) &= \frac{1}{\rho^2} \sum_{i=1}^\rho \sigma_i^2
    \end{align*}
    
    And:
    \begin{align*}
        Var\left( \frac{\sum_{i=1}^\rho \tau_i \bar{\varphi}_i}{\sum_{i=1}^\rho \bar{\varphi}_i} \right) &= Var\left(\frac{\sum_{i=1}^\rho \tau_i (k/ \sigma_i^2)}{\sum_{j=1}^\rho (k/ \sigma_j^2)} \right)&\because \bar{\varphi}_i = \frac{k}{\sigma_i^2} \text{for some constant $k$} \\
        &= \frac{\sum_{i=1}^\rho \sigma_i^2 \left(\frac{1}{\sigma_i^2}\right)^2}{\left(\sum_{j=1}^\rho \frac{1}{\sigma_j^2}\right)^2}\\
        &= \frac{1}{\sum_{i=1}^\rho \frac{1}{\sigma_i^2}}
    \end{align*}
The Cauchy-Schwarz inequality, states $(\sum u_i^2)(\sum v_i^2) \ge (\sum u_i v_i)^2$. Let $u_i = \sigma_i$ and $v_i = 1/\sigma_i$.
\begin{align*}
    \left(\sum_{i=1}^\rho \sigma_i^2\right) \left(\sum_{i=1}^\rho \left(\frac{1}{\sigma_i}\right)^2\right) &\ge \left(\sum_{i=1}^\rho \sigma_i \cdot \frac{1}{\sigma_i}\right)^2 \\
    \left(\sum_{i=1}^\rho \sigma_i^2\right) \left(\sum_{i=1}^\rho \frac{1}{\sigma_i^2}\right) &\ge \left(\sum_{i=1}^\rho 1\right)^2 \\
    \left(\sum_{i=1}^\rho \sigma_i^2\right) \left(\sum_{i=1}^\rho \frac{1}{\sigma_i^2}\right) &\ge \rho^2\\
     \frac{1}{\rho^2} \sum_{i=1}^\rho \sigma_i^2 &\ge  \frac{1}{\sum_{i=1}^\rho \frac{1}{\sigma_i^2}}\\
      Var\left(\frac{1}{\rho} \sum_{i=1}^\rho \tau_i \right) &\ge  Var\left( \frac{\sum_{i=1}^\rho \tau_i \bar{\varphi}_i}{\sum_{i=1}^\rho \bar{\varphi}_i} \right) 
\end{align*}
\end{proof}

\end{document}